\theoremstyle{definition}
\newtheorem{thm}{Theorem}
\newtheorem{lem}{Lemma}
\newtheorem{cor}{Corollary}
\newtheorem{rem}{Remark}
\newtheorem{claim}{Claim}
\newtheorem{pro}{Proof}
\newcommand{\RR}{\mathbb{R}}
\newcommand{\dx}{\mathrm{dx}}
\newcommand{\limfi}[1]{\displaystyle {\lim_{#1\to\infty}}}
\newcommand{\norm}[1]{\left\Vert #1\right\Vert}
\newcommand{\bmat}[4]{\ensuremath{\begin{bmatrix} #1 & #2 \\ #3 & #4 \end{bmatrix}}}
\title{Unsupervised Discovery of Formulas for Mathematical Constants}
\author{
  Michael Shalyt\thanks{Equal contribution.}\quad Uri Seligmann\footnotemark[1]\quad Itay Beit Halachmi\quad Ofir David \\ \textbf{Rotem Elimelech\quad Ido Kaminer}\\
  Technion - Israel Institute of Technology, Haifa 3200003, Israel\\
  \texttt{shalyt@technion.ac.il, uri.seligmann@gmail.com} \\
  \texttt{itaybe@campus.technion.ac.il, eofirdavid@gmail.com}\\
   \texttt{rotem.eli@campus.technion.ac.il, kaminer@technion.ac.il}
}
\begin{document}

\maketitle

\renewcommand{\thefootnote}{}
\footnotetext{Project repository: https://github.com/RamanujanMachine/Blind-Delta-Algorithm}
\renewcommand{\thefootnote}{\arabic{footnote}}

\vspace{-10pt}
\section{Abstract}
\label{section-abstract}
Ongoing efforts that span over decades show a rise of AI methods for accelerating scientific discovery 
\citep{graffiti,A_eq_B_Zeilberger,wolfram2002new,buchberger2006theorema,bailey2007experimental,Raayoni2021,davies2021advancing,fawzi2022_deepmind_matmul}, yet accelerating discovery in mathematics remains a persistent challenge for AI.
Specifically, AI methods were not effective in creation of formulas for mathematical constants because 
each such formula must be correct for infinite digits of precision, with ``near-true'' formulas providing no insight toward the correct ones. Consequently, formula discovery lacks a clear distance metric needed to guide automated discovery in this realm.
In this work, we propose a systematic methodology for categorization, characterization, and pattern identification of such formulas. The key to our methodology is introducing metrics based on the convergence dynamics of the formulas, rather than on the numerical value of the formula. These metrics enable the first automated clustering of mathematical formulas.
We demonstrate this methodology on Polynomial Continued Fraction formulas, which are ubiquitous in their intrinsic connections to mathematical constants \citep{Lagarias2013, bowman2002polynomial, laughlin2004real}, and generalize many mathematical functions and structures.
We test our methodology on a set of 1,768,900 such formulas, identifying many known formulas for mathematical constants, and discover previously unknown formulas for $\pi$, $\ln(2)$, Gauss', and Lemniscate's constants. The uncovered patterns enable a direct generalization of individual formulas to infinite families, unveiling rich mathematical structures. 
This success paves the way towards a generative model that creates formulas fulfilling specified mathematical properties, accelerating the rate of discovery of useful formulas.

\section{Introduction}
\label{section-introduction}

Historically, formulas of mathematical constants were a symbol of aesthetics and beauty. Continued fraction formulas such as those for the Golden Ratio $\phi$ and for $\tan(x)$
\begin{equation} \label{eq:aesthetic_pcfs}
    1 + \frac{1}{
        1 + \frac{1}{
            1 + \frac{1}{
                1 + \cdots
    }}} = \phi 
    \quad
    \frac{x}{
    1 - \frac{x^2}{
        3 - \frac{x^2}{
            5 - \cdots
    }}} = \tan(x)
\end{equation}
enable calculating infinitely many digits for these constants. 
Discovering such formulas often leads to profound revelations regarding the properties and underlying structure of fundamental constants.
For example, the continued fraction formula for $\tan(x)$, shown in Eq. \ref{eq:aesthetic_pcfs}, was used by Johann Heinrich Lambert in the first proof of the irrationality of Pi \citep{Lambert_pi}\citep{berggren2004memoire_lambert_pi_ir}.
Unfortunately, such formulas are notoriously hard to find on-demand, often relying on a mathematician’s profound intuition. Part of the challenge is the lack of a well-defined `distance', or a metric, between a formula and a given constant. i.e., there is no known way to tell whether a formula is nearly accurate. The formula either works, or it does not.  
In other fields of science, a prediction accurate to 1000 digits is precise enough for any practical need. However, in mathematics, if the 1001\textsuperscript{st} digit is wrong, the formula is incorrect and gives no insight regarding a correct formula. 
This lack of a metric is a substantial hurdle both for human efforts and for automated analysis, as many methods for optimization such as gradient descent become unsuitable.

Recent efforts developed computer algorithms to discover a multitude of formula hypotheses for mathematical constants \citep{Raayoni2021}, even implementing the first large-scale distributed computation for such discoveries \citep{elimelech2023algorithm}, but they relied mostly on exhaustive search methods. These approaches complements earlier applications of algorithms for automated theorem proving (ATP) (such as computer proofs of hypergeometric identities \citep{A_eq_B_Zeilberger}, Malarea \citep{MaLARea}, and Flyspeck \citep{flyspeck}), and automated conjecture generation (ACG) (such as mechanical mathematics \citep{wang_mechmath}, the Automated Mathematician \citep{lenat_heuristics}, EURISKO \citep{am_eurisko,davis1982knowledge}, and Graffiti \citep{graffiti}).

Here we propose a fundamentally new methodology for automated investigation and discovery of formulas for mathematical constants. 
We constructed a large dataset of continued fractions, and enriched it with metrics based on their convergence dynamics, which are found to embody fundamental information about each continued fraction. 
These dynamical metrics enable the identification and generalization of patterns within the dataset. Using the metrics, we develop a process of categorization and clustering (Fig. \ref{fig:formula-discovery-flowchart}) of continued fractions that share similar values of their dynamical metrics.
Analyzing each automatically identified cluster of formulas, we find that all its members often relate to the same mathematical constant, showing the value of the dynamical metrics for the discovery of new formulas and the internal structure of families of such formulas. 
This novel method of formula discovery allowed us to identify both previously known and completely new formulas for constants such as $\pi$, $\ln(2)$, $\cot(1)$, the Golden Ratio, square roots of multiple integers, the Gauss constant, and the Lemniscate constant.

\begin{figure}[ht]
    \centering
    \includegraphics[width=11cm]{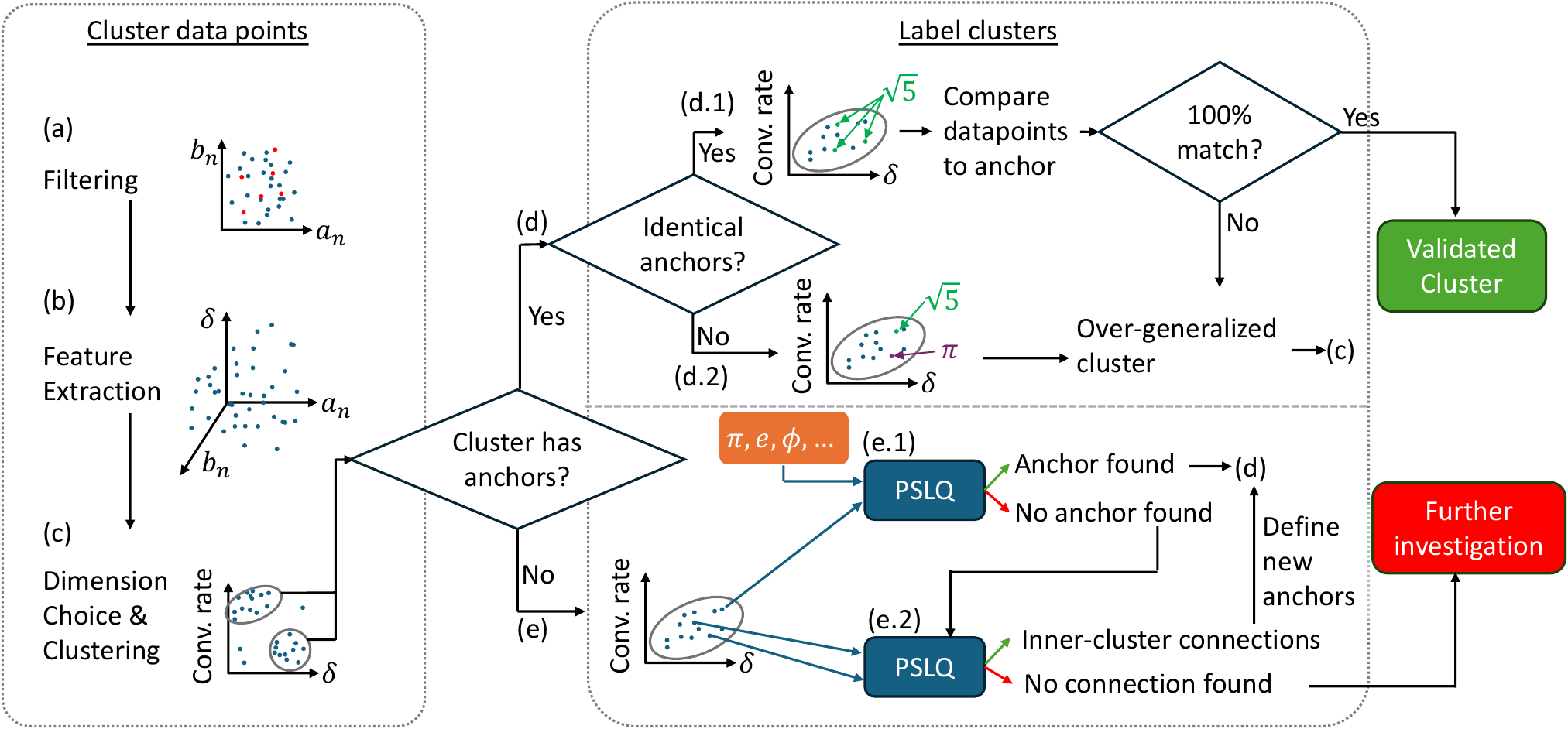}
    \caption{\textbf{Systematic clustering and labeling of formulas by dynamical metrics.} Our methodology analyzes Polynomial Continued Fractions (PCFs) in two main stages. \textbf{Clustering}: (a) Filter degenerate PCFs. (b) Evaluate PCFs and extract their dynamics-based metrics (section \ref{section-methodology}). (c) Choose the best few metrics and use them to cluster the data. \textbf{Labeling}: In every cluster, look for PCFs known in the literature and use them as anchors. (d) If anchors are found in the cluster, validate that they do not contradict, i.e., relate to different constants. (d.1) If all anchors are in agreement, choose a random subset of other points in the cluster and use PSLQ to validate that they also relate to the same constant. If the validation is successful, the cluster is labeled. If not, the cluster should be split. (d.2) If the anchors relate to different constants, the cluster should be split -- return to step c for finer clustering of the data. When focusing on a specific cluster, the best metrics could be different than those for the full dataset. (e) If no anchor is found in a certain cluster, attempt to label by (e.1) choosing a small subset of PCFs in the cluster and running a PSLQ search for each of them against a large set of potential constants. If a connection is found, the cluster now has an anchor -- return to step d. (e.2) If an anchor is still not found, attempt to connect a sample of data points within the cluster using PSLQ. If successful, conclude that the cluster is correct, but has no identified constant. Define a new label for that cluster. If PSLQ failed to connect points within the cluster, return to step c for finer clustering. If no further refinement is appropriate, flag the cluster for further analytical investigation.}
    \label{fig:formula-discovery-flowchart}
    \vspace{-10pt}
\end{figure}

As part of our analysis of metrics of continued fractions, we developed and applied the most complete classification of polynomial continued fractions known to date, detailed in Appendix \ref{appendix-PCF clasification}. This classification includes the prediction of whether the continued fraction converges directly based on its defining polynomials.

Traditional clustering methods attempt to relate data points by calculating distance metrics based on the parameters of these data points, e.g., the coefficients of the defining polynomials.
The most common approaches (like SVM) rely on linear classification, while more advanced methods rely on non-linear kernel transformations - but usually use various functions calculated directly on the data parameters.
In our dataset, each point is a continued fraction formula defined by the polynomials used to construct it.
However, we find that it is not sufficient to use the parameters of the polynomials, and not even the numerical limit of the continued fraction.
Instead, we find that it is the \textit{dynamics} of the continued fraction generated by these polynomials, rather than any direct function on their coefficients, which provides the most useful metrics for analysis.
In other words, we find that the useful underlying metrics to extract from each data point are embedded within the intricate progression of the sequence created by the formula, rather than the explicit numerical value (limit) of that formula, or the coefficients defining it. 
Thus, in order to assess the distance between two polynomial continued fractions, and identify relations between such formulas, it is imperative to characterize the nuanced behavior of their sequences, analyzing trends in the convergence process of these sequences, spanning over numerous terms. 

Some of the metrics we extract, such as the irrationality measure, are well-known in the mathematical community, yet were never considered for a large-scale classification effort. 
The evaluation of the irrationality measure is technically challenging for formulas whose limit is not known in advance (which is the vast majority). This challenge made it impossible to extract the irrationality of formulas for a large dataset.
Consequently, we develop a new algorithm -- the Blind-$\delta$ algorithm (Section \ref{section-blind delta}) -- to enable the extraction of the irrationality measure of a continued fraction without prior knowledge of its limit. This algorithm allowed us to extract the irrationality measure for the entire dataset.

These advances provide the building blocks for our novel methodology for formula discovery. We cluster formulas by their `closeness' to other formulas according to these new metrics, thus identifying promising formulas regardless of their numerical value (Fig. \ref{fig:formula-discovery-flowchart}left). Once a candidate formula is found, we numerically validate it by calculating its value to a large precision and then identifying its relation to a mathematical constant.
The ``generate \(\Rightarrow\) validate'' approach is inspired by works in AI-driven code generation  \citep{AlphaCodium} and problem solving in geometry \citep{AlphaGeometry}.

\vspace{-2pt}
\section{Methodology for Data-Driven Discovery} \label{section-methodology}

\subsection{Definitions}
\textit{Polynomial Continued Fractions}

\setlength{\parindent}{0pt}
In this work we chose to focus on polynomial continued fraction (PCF) formulas as our test case due to the combination of their simplicity and expressive power. PCFs relate to a wide range of mathematical fields, represent a variety of constants, are equivalent to infinite sums \citep{euler1748}, and cover mathematical functions such as Bessel functions, trigonometric functions, integral families, widely used Taylor series, and generalized hypergeometric functions \citep{handbook_of_cfs}. Thus, studying PCFs can provide insight into a plethora of mathematical objects and applications.

A PCF at depth $n$ is defined as:
\begin{equation} \label{eq:pcf_def}
a_0 + \cfrac{b_1}{a_1 + \cfrac{b_2}{\ddots + \cfrac{b_n}{a_n}}} = \frac{p_n}{q_n}
,\end{equation}
where $a_n=a(n)$ and $b_n=b(n)$ are evaluations of polynomials with integer coefficients. 
The PCF value is the limit $L=\displaystyle{\lim_{n\to\infty}}{\frac{p_n}{q_n}}$ (when it exists). The converging sequence of rational numbers $\frac{p_n}{q_n}$ provides an approximation of $L$, which is known as a Diophantine approximation. 

\textit{The Irrationality Measure of a Number}

While irrational numbers cannot be expressed using a simple quotient of integers, they can be approximated by them. Moreover, some approximations are ``better'' than others, and one way to evaluate their quality is by a quantity called the irrationality measure \citep{intro_num_theo_ir_measure}.

We define the \textit{irrationality measure of a sequence} $\frac{p_n}{q_n} \to L$ as the limit $\delta_n \to \delta$, with 
\vspace{-1pt}
\begin{equation} \label{eq:delta_n_def}
\delta_n = \frac{-\log\left|L-\cfrac{p_n}{q_n}\right|}{\log\left|\Tilde{q}_n\right|}-1
\quad
,
\quad
\Tilde{q}_n =\cfrac{q_n}{ \gcd(p_n,q_n)}
\end{equation}

For every \(L\in\mathbb{R}\), the \textit{irrationality measure of \(L\)} is defined as the supremum of all possible \(\delta\) for which there is a sequence of distinct rational numbers $\frac{p_n}{q_n} \to L; \frac{p_n}{q_n} \ne L$ that satisfies
\begin{equation} \label{eq:ir_measure_def}
\left|L-\cfrac{p_n}{q_n}\right| < \cfrac{1}{q_{n}^{1+\delta}}.
\end{equation}
It is known that for irrational numbers this measure is $\geq 1$ (Dirichlet theorem for Diophantine approximations), and for rationals it is $0$.

Note that the irrationality measure of \(L\) is greater or equal to the irrationality measure of any specific sequence converging to the same \(L\). While the irrationality measure of a sequence can be any number $\geq$ -1, the irrationality measure of its limit $L$ is always either 0 or $\geq$ 1 \citep{church2019diophantine}.

\vspace{-2pt}
\subsection{\texorpdfstring{$\delta$}{delta}-Predictor Formula} \label{section-kamidelta}

The classification of a large number of continued fraction formulas requires an efficient and accurate calculation of the irrationality measure $\delta$ for each formula. This calculation is challenging because it depends on the asymptotic behavior of the converging sequence, and because $\delta$ appears as an exponent of a large basis number.
The $\delta$-Predictor formula that we present here provides a way around this challenge - requiring no specific knowledge about the convergence rate and trajectory, or even about the sequence limit itself:
 \begin{equation} \label{eq:kamidelta}
    \delta_{\mathrm{predicted}} = \lim_{n \to \infty}{\frac{n \cdot \log\left| \frac{\lambda_1(n)}{\lambda_2(n)} \right|}{\log \left| \Tilde{q}_n \right|}} -1
    \end{equation}
    where $\lambda_1(n)$ and $\lambda_2(n)$ are the eigenvalues of the matrix $\begin{pmatrix}
0 & b_n \\
1 & a_n 
\end{pmatrix}$, $\left| \lambda_1(n) \right| > \left| \lambda_2(n) \right|$.

This formula extends a hypothesis made in a previous work \citep{David2021DeltaPredictor},
which was limited to PCFs with $\mathrm{deg}(B)=2 \mathrm{deg}(A)$ and with a $\Tilde{q}_n$ that grows exponentially.
As we found in this work, Eq.\ref{eq:kamidelta} works for any converging PCF. It was validated numerically and proven for the $\mathrm{deg}(B)=2 \mathrm{deg}(A)$ case in Appendix \ref{appendix-convergence and kamidelta}. 
This formula helps estimate the irrationality measure, a critical dynamical metric for our work. 
Specifically, the asymptotic behavior of $\Tilde{q}_n$ and $\nicefrac{\lambda_1}{\lambda_2}$ are still required for finding $\delta_{\mathrm{predicted}}$, but they are usually easier to derive.

\subsection{Discovery of Formulas by Unsupervised Learning} \label{section-discovery-of-formulas-definitions}

Each PCF formula is defined by the polynomials that generate it. This work focuses on polynomials up to 2nd degree: $a_n = A_2n^2+A_1n+A_0$, $b_n = B_2n^2+B_1n+B_0$, with integer coefficients in the domain $-5\leq A_i \leq 5$, $-5\leq B_i \leq 5$. We removed the $a = 0$ and $b = 0$ cases, as they break the PCF structure, leaving us with 1,768,900 formulas. 
Some of these PCFs do not converge to a single limit, rendering their measured metrics meaningless (see Appendix \ref{appendix-PCF clasification} for the classification method we developed to predict PCF convergence). We filtered out all formulas that do not converge, providing the final filtered dataset of 1,543,926 formulas.

The conventional classification of the PCFs is by the 
coefficients of their polynomials $a_n,b_n$, $(A_2, A_1, A_0, B_2, B_1, B_0)$, and by their numerical limit $L$, which we evaluate here at depth $n=2000$. 

Going beyond these conventional classification, our methodology relies on dynamics-based metrics calculated for each formula:

\begin{itemize}
    \item The irrationality measure: for each PCF, we calculate $\delta_{\mathrm{predicted}}$ (Eq.\ref{eq:kamidelta}) 
    and compute $\delta$ directly using the Blind-\texorpdfstring{$\delta$}{delta} algorithm (presented in section \ref{section-blind delta}) at depth $n=1000$. Fig.\ref{fig:numerical-delta}a presents example $\delta$ evaluations.
    
    \item The convergence rate dynamics, comprised of three parameters: for each PCF, we fit the approximation error $\epsilon(n):=\left|\frac{p_n}{q_n}-L\right|$, which scales as $\epsilon(n) \sim n!^\eta\cdot e^{\gamma n} \cdot n^\beta$ for large $n$. We store the fitted $\eta$ (factorial coefficient), $\gamma$ (exponential coefficient), and $\beta$ (polynomial coefficient). The process is detailed in Appendix \ref{appendix-curve fiting}.
    
    \item The growth rate of $\Tilde{q}_n$, comprised of three parameters: For each PCF, we fit the denominator to $\Tilde{q}_n \sim n!^{\eta'}\cdot e^{\gamma' n} \cdot n^{\beta'}$ for large $n$. We store the fitted parameters $\eta',\gamma',\beta'$.
\end{itemize}

Based on this set of metrics, we applied unsupervised clustering for unlabeled data (using the hierarchical 
 density-based HDBSCAN algorithm \citep{McInnes2017}). The clustering is the key component in the algorithm we developed (Fig.\ref{fig:formula-discovery-flowchart}), leading to the discovery of a variety of formulas and data patterns that exposed formula families (see sections \ref{section-discovered formulas}, \ref{section-clustering results}, \ref{section-results symmetry}, and \ref{section-High Deg Polynomials} for selected results). 

\begin{figure*}[!ht]
    \begin{center}
    \includegraphics[width=1\linewidth]{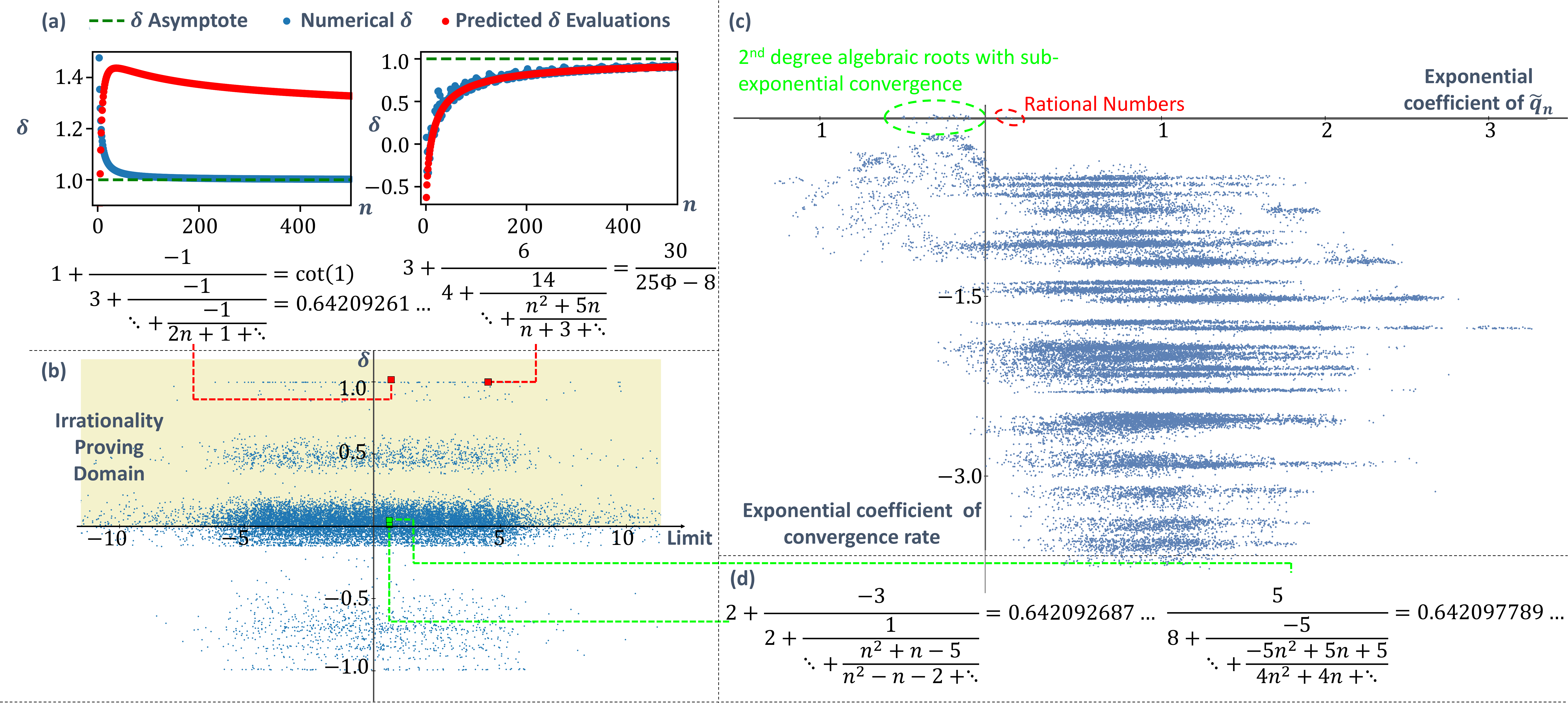}
    \caption{\textbf{Dynamics-based metrics for formulas of mathematical constants.}
    Analysing the convergence of polynomial continued fraction (PCF) formulas provide \textit{dynamical metrics} that prove useful for their automated clustering and identification.
    (a) Irrationality measure $\delta$ vs. PCF depth, for two example formulas of the constants $\cot(1)$ and the Silver Ratio. The $\delta$ of these constants is known to be $1$ (green dashed lines). The blue dots show the numerical convergence of $\delta$ (Eq.\ref{eq:delta_n_def}) to the correct value. The red dots show the evaluated $\delta$-Predictor formula (Eq.\ref{eq:kamidelta}), following the numerical $\delta$ very closely in the Silver Ratio formula, while taking a completely different (and much slower) trajectory in the $\cot(1)$ formula; yet both converge to the correct value $\delta=1$. For the purposes of clustering, $\delta_{\mathrm{predicted}}$ was evaluated at $n = 10^9$, providing an accurate estimation for $\delta$.
    (b) $\delta_n$ ($n=1000$) vs. the limit value for PCFs in our dataset. While $\delta$ values seem to follow a pattern, the limit value distribution does not contain relevant information (the higher density of PCFs near the Y axis arises from the small coefficients of the polynomials in our dataset). Our dataset contains $913,056$ irrationality-proving formulas, most of which are not yet linked to any known constant. 
    (c) Exponential growth coefficients of $\Tilde{q}_n$ and of $\epsilon(n)$ for PCFs with $\mathrm{deg}(B)=2 \mathrm{deg}(A)$. Note the surprising ``band-structures'' that this view reveals. A few of the clusters have been identified, but the reason for the appearance of these ``bands'' and the properties of most clusters remain as open questions. 
    (d) Example PCFs in the dataset that converge to a value close to the constant $\cot(1)$ ($\pm 10^{-5}$) and yet are not related to it, showcasing the challenge of mathematical formula discovery.
    For visual clarity, error bars not shown. See Appendix \ref{appendix-curve fiting} for a discussion regarding measurement errors.}
    \label{fig:numerical-delta}
    \end{center}
    \vspace{-10pt}
\end{figure*}

\subsection{The Blind-\texorpdfstring{$\delta$}{delta} Algorithm} \label{section-blind delta}

The irrationality measure $\delta$ of a PCF is of mathematical interest, and (as we will see in section \ref{section-results}) is a powerful dynamical metric. Unfortunately, evaluating $\delta$ using Eq.\ref{eq:delta_n_def} requires knowing the series limit $L$, making its estimation for a large set of unlabeled PCFs impractical.

The Blind-$\delta$ algorithm was created in order to circumvent this limitation. Instead of inspecting the convergence behavior of $\frac{p_n}{q_n}\to L$, we inspect the convergence behavior of $\frac{p_n}{q_n}\to \frac{p_m}{q_m}$ for specific $m > n$.
Given a rational approximation $\frac{p_n}{q_n}\to L$, we approximate the error rate $\epsilon(n)$ with 
$$\left|\frac{p_n}{q_n}-\frac{p_{m}}{q_{m}}\right|=\epsilon(n)\cdot \left|1-\frac{\epsilon(m)}{\epsilon(n)}\right|.$$

If $0<s<\left|1-\frac{\epsilon(m)}{\epsilon(n)}\right|<S$ is bounded away from zero and infinity for all $n$ large enough, then the approximation of the Blind-$\delta$ algorithm has the same convergence rate as Eq.\ref{eq:delta_n_def}, bypassing the need to evaluate $L$.
Intuitively, this condition holds whenever $|\epsilon(n)| \to 0$ fast enough, which is true for the vast majority of PCFs (see Appendix \ref{appendix-convergence and kamidelta} for details).

Note that $m$ has to grow with $n$. In practice, our implementation of the algorithm in this work uses $m=2n$, so in order to study $\delta$ up to $n=1000$, we use $m=2000$.

\subsection{Choice of Metrics for Clustering}
\label{section-data-nalysis-clustering}

As part of the automated formula discovery flow we choose the best metric (for each step), in terms of representation power, which is measured by applying the Davies-Bouldin Index \citep{DBI} on clustering using each metric individually. Table \ref{table:dynamic_features} shows results for a randomly chosen sample of 25K converging PCFs. 
Note the extremely poor performance of the PCF limit $L$, in agreement with Fig.\ref{fig:numerical-delta}b,d.
This dimensionality reduction is important for efficiency during the clustering step and for better explainability. The former is because the dataset size grows exponentially with the PCF degree and with the magnitude of the polynomial coefficients.

\vspace{-2pt}

\begin{table}[ht]
\centering
\renewcommand{\arraystretch}{1.25}
\caption{Comparison of the representation power of the main dynamical metrics (lower is better). $\beta$, $\beta'$ and $(A_2, A_1, A_0, B_2, B_1, B_0)$ provide little value for the initial clustering and are not shown.}

\begin{tabular}{|l|c|c|}
\hline
\multicolumn{2}{|l|}{Metric} & Davies-Bouldin Index \\
\hline
\multicolumn{2}{|l|}{Limit $L$} & 67.23 \\
\hline
\multicolumn{2}{|l|}{Irrationality measure $\delta$}  & 1.11 \\
\hline
\multirow{2}{*}{\parbox{6cm}{Reduced denominator $\tilde{q}_n$ growth factors $\tilde{q}_n \sim n!^{\eta'} \cdot e^{\gamma' n} \cdot n^{\beta'} $}} 
 & Exponential coefficient $\gamma'$ & 0.51 \\
 \cline{2-3}
 & Factorial coefficient $\eta'$ & 0.13 \\
\hline
\multirow{2}{*}{\parbox{6cm}{Error rate $|e(n)|$ growth factors $|e(n)| \sim n!^{\eta} \cdot e^{\gamma n} \cdot n^{\beta} $}} 
 & Exponential coefficient $\gamma$ & 14.83 \\
 \cline{2-3}
 & Factorial coefficient $\eta$ & 0.77 \\
\hline
\end{tabular}
\label{table:dynamic_features}
\end{table}

Other metrics were tested. Some have been shown to have little to no representation power (e.g. $p_n$ and $q_n$, as defined in Eq.\ref{eq:pcf_def}, modulo various primes, their sign, their GCD etc.) while others show potential and are left for future study (e.g. the leading Fourier coefficients of the "noise" around the fit of $\tilde{q}_n$).
A relatively small number of metrics were measured and used, which helped keep the results mathematically explainable. Nevertheless, the clustering using the metrics in Table \ref{table:dynamic_features} showcases the strength of our dynamical metrics approach.

\begin{figure}
   \centering
   \includegraphics[width=14cm]{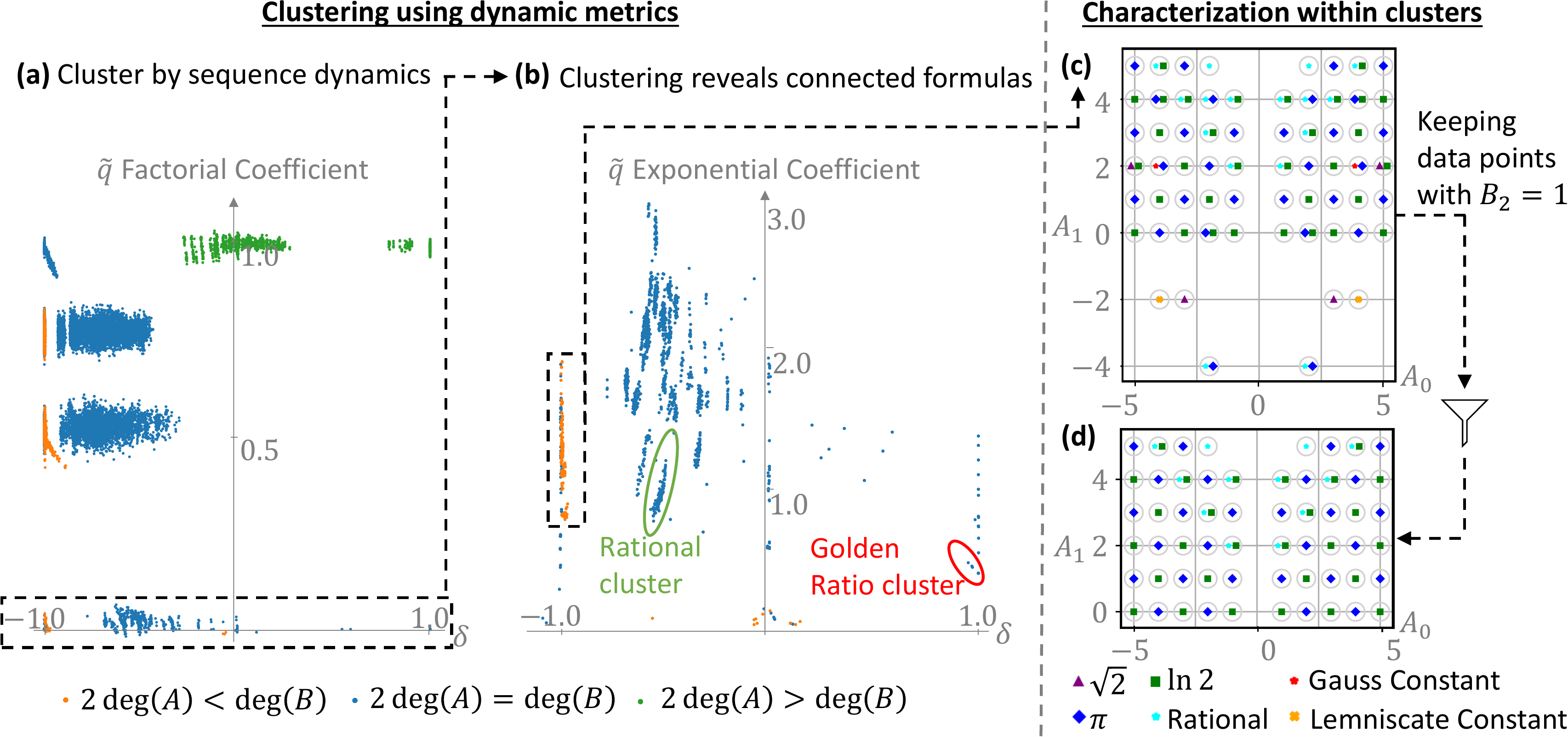}
   \caption{\textbf{Discovery of mathematical structures via analysis of dynamical metrics of formulas.} (a) Projecting the data on the $\delta$ vs. $\eta'$ ($\Tilde{q}_n$ factorial coefficient) plane, it is easy to see the emerging subsets. We focus on PCFs with $\eta'\approx0$, as a previous work \citep{elimelech2023algorithm} indicated this as an important property. (b) Clustering in the $\delta$ vs. $\gamma'$ ($\Tilde{q}_n$ exponential coefficient) plane shows examples of common properties within a cluster, like rationality or convergence to a specific constant (up to a linear fractional transformation). Focusing further on the $\mathrm{deg}(B)>2 \mathrm{deg}(A)$ cluster (as it is a clear anomaly in the $\eta'\approx0$ subset), we used a PSLQ algorithm to identify links between these formulas and mathematical constants. This identification was feasible since a preliminary step identified a promising subset $\sim5,000$ times smaller than the initial dataset. (c) The result of this clustering and identification procedure is a structured arrangement of formulas that reveal a range of novel formulas related to constants such as $\pi$, $\ln(2)$, $\sqrt{2}$, Gauss' constant, and Lemniscate's constant. 
   (d) Keeping only PCFs with $B_2=1$ we are left with a highly symmetrical ``checkerboard pattern'' of formulas for \(\pi\) and $\ln(2)$, which was generalized into infinite formula families hypotheses (see section \ref{section-results symmetry}). Error bars not shown for visual clarity, see Appendix \ref{appendix-curve fiting} for a discussion regarding measurement errors.}
   \label{fig:clustering-results}
   \vspace{-12pt}
\end{figure}

\section{Results}  \label{section-results}

\subsection{Discovered Formulas for Mathematical Constants} \label{section-discovered formulas}

The first step in validating the dynamical metrics approach is using basic heuristics on the metric space to find PCFs related to mathematical constants. There are some PCFs in the dataset that have a known irrational limit (like the examples in Eq.\ref{eq:aesthetic_pcfs} and the PCF family 
$$\cfrac{B}{A+\cfrac{B}{\smash{A+\mkern10mu \raisebox{-0.1\height}{\ensuremath{\ddots}}}}}  = \frac{2B}{A+\sqrt{A^2+4B}}$$ 
for constant $A$ and $B$), so we expected to find some of them. Through this test, we also found \textit{previously unknown} PCF formulas related to mathematical constants.
 Note that known mathematical formulas are both the anchors for labeling and a test set in our method. 

A natural heuristic is inspecting PCFs with $\delta\approx1$, marking their limits as irrational. Another heuristic we used is focusing on PCFs with $\eta'\approx0$, as it was a very strong indicator for mathematical constant formulas in a previous work \citep{elimelech2023algorithm}. Combining the two gives a subset (see Fig.\ref{fig:clustering-results}a top left) that contains PCFs such as:

\vspace{-0.2cm}
\begin{equation} \label{eq:constant-results1}
\begin{aligned}
5 + \cfrac{-10}{\ddots + \cfrac{-5n^2 - 5n}{5n + 5 + \ddots}} = 2 + \phi
\hspace{0.05\textwidth}
-3 + \cfrac{1}{\ddots + \cfrac{1}{-3 + \ddots}} = \frac{-2}{\sqrt{13}-3}\\
\end{aligned}
\vspace{-2pt}
\end{equation}

Removing the limitation on $\Tilde{q}_n$ growth rate, one can find the $\cot(1)$ formula shown in Fig.\ref{fig:numerical-delta}a:

\vspace{-0.1cm}
\begin{equation} \label{eq:constant-results-cot}
\begin{aligned}
1 + \cfrac{-1}{\ddots + \cfrac{-1}{2n + 1 + \ddots}} = \cot(1)
\end{aligned}
\end{equation}

On the other hand, relaxing the limitation on $\delta$, focusing only on $\eta'\approx0$, a rich structure emerges (Fig.\ref{fig:clustering-results}b). Diving deeper into the B-dominated subset, we find formulas (Fig.\ref{fig:clustering-results}c) for the Gauss constant $G_{GA}$ \citep{Math_constants_Fitch2003}:

\vspace{-0.1cm}
\begin{equation} \label{eq:constant-gauss}
\begin{aligned}
&4 + \cfrac{6}{\ddots + \cfrac{4n^2 + 2n}{4 + \ddots}} = \frac{2G_{GA}}{4G_{GA}-3}
\hspace{0.05\textwidth}
&4 + \cfrac{4}{\ddots + \cfrac{4n^2 + 2n - 2}{4 + \ddots}} = \frac{4G_{GA}-1}{3G_{GA}-2}
\end{aligned}
\vspace{-2pt}
\end{equation}
\vspace{-0.1cm}

Lemniscate constant $L_{Lemniscate}$ \citep{Math_constants_Fitch2003}:

\vspace{-0.2cm}
\begin{equation} \label{eq:constant-lemniscate}
\begin{aligned}
4 + \cfrac{2}{\ddots + \cfrac{4n^2-2n}{4 + \ddots}} = \frac{-6}{L_{Lemniscate}-4}\\
\end{aligned}
\end{equation}

As well as for second order roots, \(\pi\) and $\ln(2)$ (see section \ref{section-results symmetry}). Note that unlike the formulas in Eq.\ref{eq:constant-results1} and Eq.\ref{eq:constant-results-cot}, which are analytically proven, the formulas in Eq.\ref{eq:constant-gauss} and Eq.\ref{eq:constant-lemniscate} are (to the best of the authors' knowledge) \textit{novel}. Their limits were numerically validated to a large precision, yet formal proofs for these formula hypotheses remain an open challenge. 

It should be noted that usually in number theory, a bigger $\delta$ is considered ``good'', whereas a smaller (often negative) $\delta$ is considered ``bad''. We use $\delta$ as a metric, without ``judgment''. These novel formulas (Eq.\ref{eq:constant-gauss}, Eq.\ref{eq:constant-lemniscate} and the infinite family of formulas shown in section \ref{section-results symmetry}), which have the ``bad'' $\delta\approx-1$, are a demonstration that our ``non-judgmental'' approach is successful.

\vspace{-0.1cm}
\subsection{Clustering in Dynamics-Based Metric Latent Space} \label{section-clustering results}

This section shows that clusters in the latent space of dynamics-based metrics are successful in grouping together different formulas in a way that exposes their shared properties, such as the mathematical constant to which they relate.

Looking at the top left cluster in Fig.\ref{fig:clustering-results}b (defined by $\Tilde{q}_n$ exponential coefficient $< 0.6$ and $\delta > 0.9$), we recognize the canonical form of the Golden Ratio PCF (shown in Eq.\ref{eq:aesthetic_pcfs}). This cluster also contains 21 additional PCFs, with different generating polynomials, some of higher degree. 
As it turns out, all of them are linear fractional transformations of $\sqrt{5}$ (see Appendix \ref{appendix-equivelence of CF}), which were labeled automatically by the formula discovery algorithm (Fig.\ref{fig:formula-discovery-flowchart}). 
Another example of property conservation within clusters is the rational cluster marked in green on Fig.\ref{fig:clustering-results}b. The limits of the PCFs in this subset are varied, and its spread is real (i.e., not only due to numerical imperfections). Yet, all the PCFs in this cluster converge to rationals - which is not directly measured by any of the latent space dimensions.

\vspace{-0.2cm}
\begin{figure}[ht]
    \centering
    \includegraphics[width=0.82\linewidth]{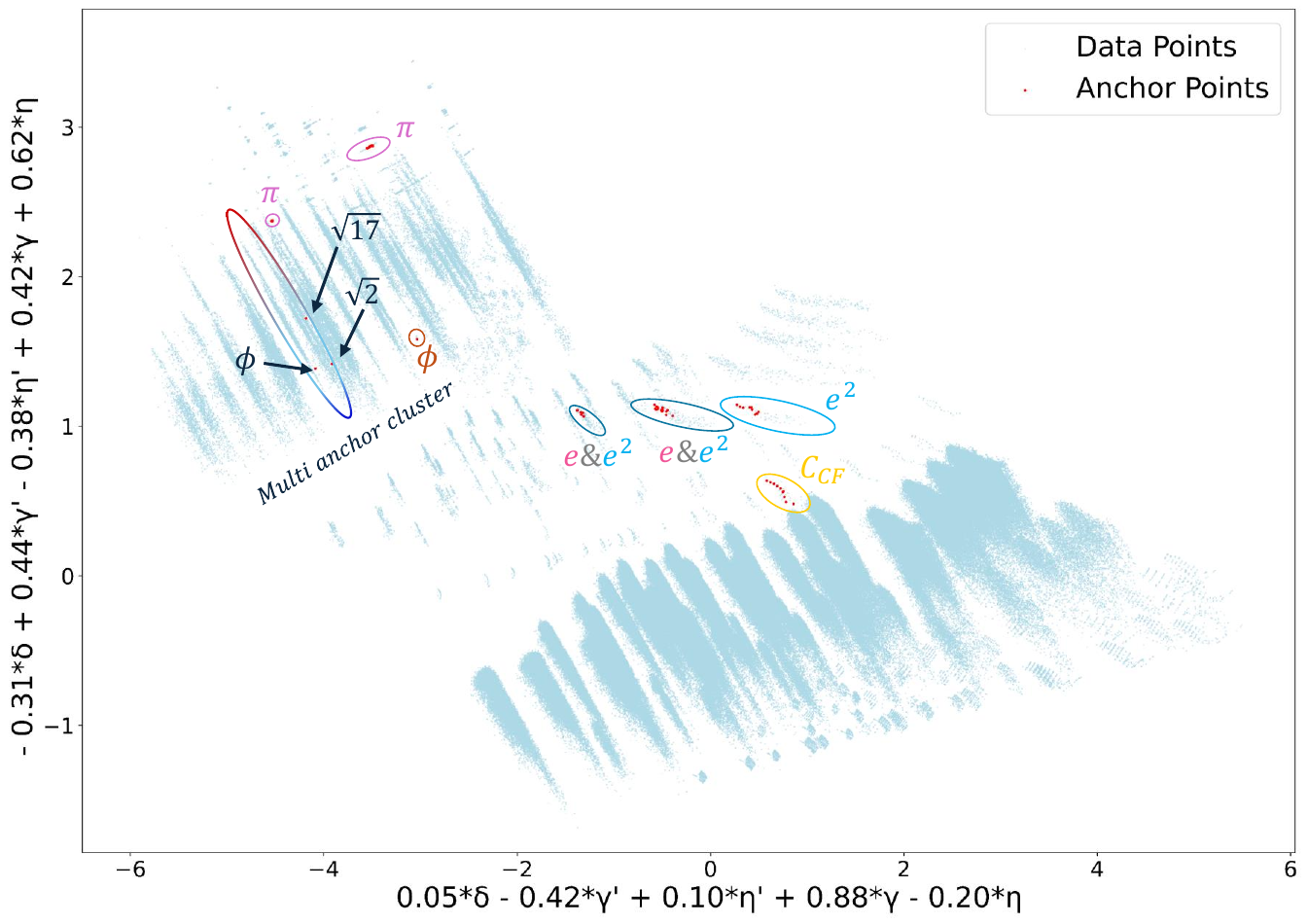}
    \caption{\textbf{Automated Formula Discovery Results:} Showcasing the automated clustering and labeling of PCFs using a set of 126 known anchor formulas, connected to constants such as $\pi, e, e^2$, $C_{CF}$ (the continued fraction constant), the golden ratio $\phi, \sqrt{2}, \sqrt{3},$ and $\sqrt{17}$. 
    The clustering is visualized here via the 2 leading PCA components, revealing \textit{441 novel, automatically discovered, mathematical formula hypotheses}.
    For visual clarity, error bars are not shown. See Appendix \ref{appendix-curve fiting} for a discussion regarding measurement errors. See Appendix \ref{appendix-clustering visualizations} for additional visualizations.}
    \label{fig:Clusters}
\end{figure}

Fig.\ref{fig:Clusters} showcases a collection of clusters with shared properties. Using a set of 126 (mathematically unique) known anchor formulas, \textit{441 novel mathematical formula hypotheses were automatically discovered}. The constants which are related to the most new conjectures are: $e^2$ (28 anchor formulas gave 178 new conjectures),  $\pi$ (39 anchor formulas gave 116 new conjectures), $e$ (44 anchor formulas gave 80 new conjectures), and $\sqrt{17}$ (1 anchor formula gave 55 new conjectures). Some of the novel formulas are equivalent to known PCFs (see Appendix \ref{appendix-equivelence of CF} for a discussion about equivalence) while other formulas were analytically proven (see Appendix \ref{appendix-sub-formula verification} and Appendix \ref{appendix-Proven Formulas}).

Note the multi-anchor clusters of $e$ and $e^2$, as well as the algebraic roots: these clusters failed to single out a specific constant, yet relate to constants of similar nature - suggesting meaningful clustering nevertheless. For the sake of visualization the algorithm stopped after the second iteration. In a standard run these multi-anchor clusters would have been separated via additional metrics.

\vspace{-0.1cm}
\subsection{Detecting Patterns and Underlying Structure} \label{section-results symmetry}
\vspace{-0.1cm}

As mentioned in section \ref{section-discovered formulas}, the $\mathrm{deg}(B)>2 \mathrm{deg}(A)$, $\eta'\approx0$ cluster, contains many formulas of interest (see Fig.\ref{fig:clustering-results}c and d). They were discovered via a PSLQ algorithm, identifying linear fractional relations between the limit values of PCFs in the subset and notable mathematical constants (such as $\pi$ or $e$). This is a computationally heavy operation, and it would be challenging to run it on all 1.5M formulas in the data set. Yet by first identifying the promising clusters, we reduce the search space $\sim5,000$ times, allowing for a deeper inspection of each PCF.

Once the ``checkerboard'' pattern in Fig.\ref{fig:clustering-results}d was discovered, the hypothesis was expanded into 2 infinite families of PCFs with sub-exponential convergence relating to $\pi$ and $\ln(2)$:

\vspace{-0.15cm}
\begin{itemize}
\item $a_n =i+2j+1, b_n =n^2 +(i+k)n$, with integers $i,j\geq0$, and $k\in\{0,1\}$. This is expected to be related to $\pi$ if $k=1$, and to $\ln(2)$ if $k=0$ (in fact, this pattern can be generalized even further, into a novel 3-dimensional Conservative Matrix Field, provided in Appendix C. See \citep{elimelech2023algorithm} for the definition of Conservative Matrix Fields).
\end{itemize}

Another formula family was discovered via clustering in the $\gamma$ vs. $\gamma'$ space. The algebraic roots subset (marked by a green circle in Fig.\ref{fig:numerical-delta}c) was generalized into:

\vspace{-0.15cm}
\begin{itemize}
\item $a_n =-2n+j-1, b_n =-n^2 +jn+k$ for integer $j,k$ such that $b_n$ has real roots that are not positive integers. This is expected to converge to a root of $b_n$.
\end{itemize}

\vspace{-0.1cm}

These are novel experimental results and mathematical hypotheses - awaiting proof. 

\vspace{-0.1cm}
\subsection{Higher Degree PCF Identification} \label{section-High Deg Polynomials}

\begin{figure}[H]
    \centering
    \includegraphics[width=0.8\linewidth]{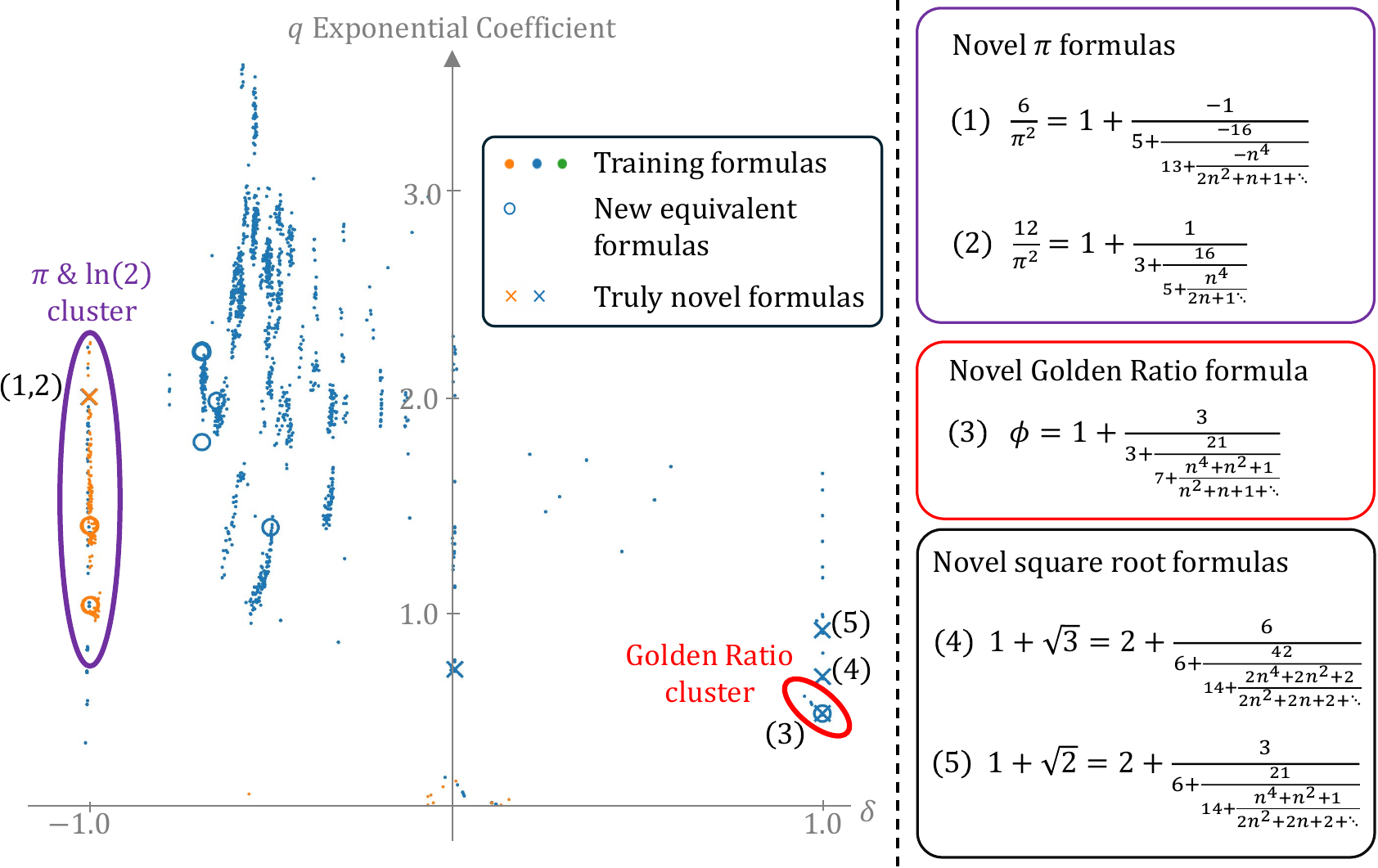}
    \vspace*{-0.1cm}
    \caption{Higher-degree PCF formulas for mathematical constants, overlaid on Fig.3b. Formulas that are equivalent to an existing element in the original dataset are marked with ``o'', while ``x'' marks truly novel formulas.
    \textit{The higher-degree PCFs fit, using the same dynamical metrics, into clusters trained on lower-degree PCFs, showing the ability to identify novel formulas despite being of mathematical forms not seen during training}. 
    }
    \label{fig:IDed-highdeg}
\end{figure}

The final validation for the dynamical metrics approach is to show its potential even on test sets of a different mathematical structure than the training set. 
For that purpose, the clusters created based on the original dataset were treated as a classifier, and a test set of higher-degree PCFs (up to 3\textsuperscript{rd} degree $A_n$ and 4\textsuperscript{th} degree $B_n$, coefficient range [-2,2]) was created, measured and classified.

Naturally, not all high-degree PCFs fit neatly into the existing clusters (as they represent constants that were not present in the original dataset), but some were correctly identified and labeled, discovering novel formulas in the process (see Fig.\ref{fig:IDed-highdeg} for a sample of the results).

\section{Discussion and Outlook}

This work marks an important step toward the vision of automated on-demand formula creation in mathematics. Going beyond all previous algorithms in this field, we connect the challenge of formula creation to robust ML methods.
This methodology provides a wide variety of automatically generated formulas, including both previously known and previously unknown ones, exposing their underlying mathematical structure and enabling new proofs.

The next research step directly building on our methodology could help to finally reveal the complete intricate mathematical structure of PCFs. For example, starting with the ``band-structure'' found in Fig.\ref{fig:numerical-delta}c, or with clusters of formulas with various structures representing the same mathematical constant.
Further exploration of our conjectures from section \ref{section-results} could have more impact on mathematics, perhaps achieving further generalizations and prescriptive formula generation.

The technique presented here can be applied to a larger scope of continued fractions and for completely different types of formulas. For more general continued fractions, 
dynamical metrics such as the numerical trajectories and the corresponding sequences of $\delta$ (in addition to its asymptotic value) hold valuable information even in continued fractions that do not converge at all. 
We expect these dynamical metrics to provide a ``fingerprint'' for wider families of formulas and perhaps even for the mathematical constants themselves. 
This approach was directly applied in this work to higher polynomial degrees, larger polynomial coefficients, and can be expanded to continued fractions not based on polynomials. 
Looking beyond continued fractions, metrics that are derived from the dynamics of a numerical calculation of certain formulas are an especially good fit for automated computer-assisted investigations. Such metrics can be measured for a variety of mathematical structures, including ones whose evaluation is iterative or recursive, that are defined via an infinite sum, or any other process which produces rational approximants. Any such mathematical structure can be measured, clustered, and identified using the proposed method - treating the generating functions as a black box.
We believe that such dynamical metrics can unveil patterns and underlying structures in broad fields of mathematics and in other areas of science.

To exemplify this universal concept, we looked into higher depth recursion relations, which are a promising research direction because little is known about their global structure, yet they are involved in several important conjectures. For example, the best rational approximation formula known for Euler’s gamma constant is constructed via such a recursion relation \citep{aptekarev2009linearformscontainingeuler}. This family of formulas is broader than continued fractions, yet they can be described by the same metrics as PCFs.
Another type of mathematical structure successfully analyzed using the same method is hypergeometric functions, showing the applicability of our measurement-clustering-generation approach to a bigger family of mathematical functions.
This generalization can be useful in a wide variety of contexts, such as investigations of integral formulas (e.g., Beukers-type integrals \citep{beukers1979note,dougherty2022tweaking,brown2022cellular}).

Our work was based on a limited-size dataset and on a small set of metrics.
It would be intriguing to test the extracted conjectures on larger datasets, which can help reveal additional, more intricate, phenomena. Considering the success we had using a relatively small set of metrics, we would like to use an order-of-magnitude larger set of metrics and find what new predictions can be recovered. In fact, the creation and evaluation of the metrics themselves can be automated.

Taking a broader perspective, the methodology presented in this work can be seen as a general prescription for tackling scientific discovery challenges in mathematics and theoretical physics that rely on numerical evaluations and generalizations. Such an advance is especially exciting for such challenges that were considered in the past to require intuitive leaps of creativity.

\newpage

\textbf{Acknowledgments}

This research received support through Schmidt Sciences, LLC.

\section{References}
\bibliographystyle{plainnat}

\appendix
\onecolumn

\section{Numerical Measurements, Curve Fitting and Validation of Formulas} \label{appendix-curve fiting}

\subsection{Numerical Measurements}
\label{appendix-sub-numerical measurements}

When characterizing PCFs, we use several metrics extracted from the dynamic behavior of the formula: 

\begin{itemize}
    \item The growth coefficients $\eta,\gamma,\beta$ (of the form $n!^\eta\cdot e^{\gamma n} \cdot n^\beta$) of the convergence rate $\epsilon(n)$.
    \item $\Tilde{q}_n$ (as defined in Eq.\ref{eq:delta_n_def}) growth coefficients: $\eta', \gamma'$ (of the form $n!^{\eta'}\cdot e^{\gamma' n}$).
    \item The $\delta$ (as defined in Eq.\ref{eq:delta_n_def}) calculated using the Blind-$\delta$ Algorithm described in section \ref{section-blind delta}.
\end{itemize}

To measure the growth coefficients of $\Tilde{q}_n$ and $\epsilon(n)$, the values of $\log\left(\epsilon(n)\right)$ (see section \ref{section-blind delta}) and of $\log\left(\Tilde{q}_n\right)$ were evaluated up to depth 1000.

The most resource-intensive values that are generated are $p_n, q_n$ and $\gcd(p_n,q_n)$ - all other values are calculated from them (and require less precision). For the worst case PCF this requires 36MB of memory (without optimizations) and $\sim1.9$ seconds of run time on a single core of a basic workstation, which translates to an upper cap of $\sim900$ hours for the whole data set. In practice we used a high power cluster with 64 cores, which ran each iteration of the measurements in $\sim8.5$ hours.

Once these values are calculated, using scipy \citep{2020SciPy-NMeth} and numpy \citep{NumPy} a fit of the form $\log\left(n!^\eta\cdot e^{\gamma n} \cdot n^\beta\right)$ was calculated for $\Tilde{q}_n$ and $\epsilon(n)$, producing the dynamical metrics.

\subsection{Curve Fitting}
\label{appendix-sub-curve fitting}
A curve fit using 1000 points is a fairly heavy operation, unsuited for large scale investigations. Instead, we used an extreme down-sampling. Specifically, only 5 points were used for the fit.

One may justifiably wonder if 5 data points are sufficient to fit accurately enough the desired metrics. 

A test comparing between a 5 data point fit and a 1000 data point fit was done. As the test set, 50 PCFs were randomly chosen out of each of 9 categories (450 total test cases). The categories were all combinations of $\deg(a) = 0,1,2$ and $\deg(b) = 0,1,2$. Focusing on the dominant coefficients ($\gamma$ and $\eta$), for each case, a full (1000 point) fit was performed (producing $\gamma_f,\eta_f$), and compared to the down sampled fit of 5 points (producing $\gamma_p,\eta_p$). We tested 2 methods of choosing the 5 points, even ($i=6,206,406,606,806$) and logarithmic ($i=6,125,250,500,1000$).
The relative error was then calculated ($\frac{\left|\gamma_p - \gamma_f \right|}{\left| \gamma_f \right|}$ and $\frac{\left|\eta_p - \eta_f \right|}{\left| \eta_f \right|}$) for $\epsilon(n)$ and $\Tilde{q}_n$. The relative errors were then averaged over the test set (results summarised in table \ref{table:Points_Num_Fig}) - showing the 5-point fit to be almost as good as the full 1000-point fit. In our measurements we use the logarithmic point distribution as it gives better results for most metrics.

\begin{table}[h]    
    \centering
    \captionof{table}{Comparison between 1000 point fit results and 5 point fits results (even spread and logarithmic spread).}
    \vskip 0.15in
    \normalsize
    \begin{tabular}{|c|c|c|c|c|c|}
    \hline
    \multicolumn{1}{|c|}{Behavior } & \multicolumn{2}{|c|}{$\Tilde{q}_n$} & \multicolumn{2}{|c|}{Convergence Rate} \\
    \hline
    Coefficient & $\gamma'$ & $\eta'$ & $\gamma$ & $\eta$ \\
    \hline
    Relative Error Average (even spread) & 0.0124
 & 0.0007
 & 0.0078
 & 0.0005\\
    Relative Error Average (logarithmic spread) & 0.0175
 & 0.0004
 & 0.0024
 &0.00012\\
    \hline
    \end{tabular}
    \label{table:Points_Num_Fig}
\end{table}

Licences and versions of Python packages (used for curve fitting, clustering and large number mathematics):

Scipy (Version: 1.11.3) - BSD License (Copyright (c) 2001-2002 Enthought, Inc. 2003-2024, SciPy Developers. All rights reserved.)

gmpy2 (Version: 2.1.5) - GNU Lesser General Public License v3 or later

Numpy (Version 1.26.1)- BSD License (Copyright (c) 2005-2023, NumPy Developers. All rights reserved.)

\subsection{Validation of Automatically Generated Formulas}
\label{appendix-sub-formula verification}
After a PCF has been identified as potentially connected to a known constant $C$, it undergoes a verification process in order to establish whether it indeed converges to a known mathematical expression involving 
$C$. This process consists of several steps:

\textit{1. Expression Identification Through PSLQ Analysis}

The PCF limit is computed and a targeted PSLQ search is used to identify an expression that matches the resulting digits and incorporates the constant $C$.
    
\textit{2. Expression Verification}
    
After a candidate expression has been identified, the PCF is computed to a greater depth. The resulting values are compared against the proposed expression to assess the accuracy of the match and verify convergence.

\textit{3. Systematic Proof}

Once the expression has been verified, an automated analytical proof is attempted via the following steps.

Given two polynomials $a(n)$ and $b(n)$ defining a PCF, Euler's formula \citep{euler1748} states that when there exist polynomials $h_1(n), h_2(n), f(n)$ such that $b(n) = -h_1(n)h_2(n)$, 
$f(n)a(n) = f(n-1)h_1(n) + f(n+1)h_2(n+1)$, the limit of the PCF equals:

\begin{equation}
    \frac{f(-1)h_1(0)}{f(0)} + \frac{f(1)h_2(1)}{f(0)}\cdot\left(\sum_{k=0}^{\infty}{\frac{f(0)f(1)}{f(k)f(k+1)}\prod_{i=1}^{k}{\frac{h_1(i)}{h_2(i+1)}}}\right)^{-1}
\end{equation}

The proof is then completed using known identities of infinite sums.

If the PCF's polynomials do not satisfy Euler's formula's conditions, the PCF is flagged for further examination.

For example, one may take the PCF defined by the polynomials $a_n = -2$ and $b_n = 4n^2 + 4n + 1$ which was labeled as related to $\pi$.
The PCF was numerically identified and validated as converging to $ 1 - \frac{3}{3 - \frac{3 \pi}{4}}$
Next, the polynomials $h_1, h_2$ and $f$ were identified as $2n+1, -(2n+1)$ and $1$ respectively, which corresponds to the sum $1 - \frac{3}{\sum_{k=0}^{\infty} \prod_{n=1}^{k} \frac{- 2 n - 1}{2 n + 3}}$.
This infinite sum is known and was indeed proven to converge to $ 1 - \frac{3}{3 - \frac{3 \pi}{4}}$.

For the entire list of proven formulas, see table \ref{tab:proven-formulas} in appendix \ref{appendix-Proven Formulas}.

\section{Classification of Continued Fractions} \label{appendix-PCF clasification}

Not all PCFs converge. Clearly, if $\frac{p_n}{q_n}$ does not have a well defined limit, then some of our numerically measured metrics lose their meaning. Though we had algorithmic safeguards to detect such cases and remove them from the analyzed set, it was valuable to identify a pattern and formulate a rule-set that predicts the convergence of a PCF.

For that purpose we turned to the matrix representation of a continued fraction to depth $n$ (see Appendix \ref{appendix-error-rate} for details):
\begin{align*}
&\bmat{p_{N-1}}{p_N}{q_{N-1}}{q_N}=\prod_{n=1}^{N-1}\bmat{0}{b_n}{1}{a_n}=\\
&\left(\prod_{n=1}^{N-1}a_{n-1}\right)\bmat{1}{0}{0}{\frac{1}{a_0}}\scriptscriptstyle\left(\prod_{n=1}^{N-1}\bmat{0}{\frac{b_n}{a_n a_{n-1}}}{1}{1}\right)\displaystyle\bmat{1}{0}{0}{a_{N-1}}
\end{align*}
Assuming $a_n\neq 0$ for $n\geq 0$.

Analyzing the eigenvalues of the matrices within the matrix product as $n\rightarrow\infty$ allows for examining the asymptotic behavior of the continued fraction. 
Their characteristic polynomial is $\lambda^2-\lambda-\frac{b_n}{a_n a_{n-1}}$, and we propose observing the discriminant of this polynomial - more specifically its dominant power of $n$:
\begin{equation} \label{eq:discriminant-expantion}
\Delta_n=1+\frac{4b_n}{a_n a_{n-1}}=C_s n^s+O(n^{s-1})
\end{equation}
Here we assume $C_s\neq 0$ and $s$ is some integer. Based on the data, we compiled table \ref{table:convergence} as a summary of the conjectured behavior of any polynomial continued fraction based on $s$ and $C_s$.
\begin{table}[h]
    \centering
    \captionof{table}{Summary of PCF behavior characterized by $s$ and $C_s$ as defined in Eq.\ref{eq:discriminant-expantion}.}
    \vskip 0.1in
    \begin{tabular}{ccc}
    \hline
    Convergence&$C_s>0$  &$C_s<0$   \\
    \hline
    \ding{55}  &$s\geq 3$&$s\geq 0$ \\
    \ding{51}  &$s\leq 2$&$s\leq -1$\\
    \end{tabular}
    
    \label{table:convergence}
\end{table}
\\ We can further elaborate on the converging cases by discussing the conjectured rate of convergence. Usually, a PCF is expected to converge at a sub-exponential rate, but in the case of $s=0,C_0 >0$ it is expected to converge faster:
\begin{itemize}
    \item If $C_0 \neq 1$ then the PCF will converge at an exponential rate, and the exact rate of convergence increases monotonically as $C_0 \rightarrow 1$, with a vertical asymptote at $C_0 =1$. The convergence rate is identical for $C_0$ and $\frac{1}{C_0}$.
    \item If $C_0 =1$ then the PCF will converge at a factorial rate. More specifically, if we find the second most dominant power $\Delta_n =C_0 +\frac{C_t}{n^t} +O(\frac{1}{n^{t+1}})$ for some $C_t \neq 0$ and integer $t>0$ then the precision will grow at a rate of $O(n!^t)$.
\end{itemize}
We used these rules (in conjunction with the measurements mentioned in section \ref{section-discovery-of-formulas-definitions}) to validate that all PCFs we analyze and cluster do converge and their measured metrics are well defined.

\section{Discovering equivalence of continued fractions}\label{appendix-equivelence of CF}

Polynomial continued fractions use two polynomials $a_n = a(n)$ and $b_n = b(n)$ to generate a sequence of rationals $p_n/q_n$. However, the same sequences with identical behaviour can be generated using more then one set of polynomials. 
By identifying transformations under which the dynamics of $p_n/q_n$ remains invariant, we can formally prove equivalence between data points, validating the clustering power of the chosen metrics.

By rearranging the continued fraction definition, we can see how equivalent $a_n$ and $b_n$ series can arise:
\begin{equation}
a_0 + \cfrac{b_1}{a_1 + \cfrac{b_2}{a_2 + \cfrac{b_3}{\ddots + \cfrac{b_n}{a_n + \ddots}}}}
= a_0\raisebox{-20pt}{$\left(
\raisebox{20pt}{$1 + \cfrac{\frac{b_1}{a_0 a_1}}{1 + \cfrac{\frac{b_2}{a_1 a_2}}{1 + \cfrac{\frac{b_3}{a_2 a_3}}{\ddots + \cfrac{\frac{b_n}{a_n a_{n-1}}}{1 + \ddots}}}}$}
\right)$}
= \frac{a_0 c_0}{c_0}\raisebox{-20pt}{$\left(
\raisebox{20pt}{$1 + \cfrac{\frac{b_1 c_{0}c_{1}}{a_0 c_0 a_1 c_1}}
    {1 + \cfrac{\frac{b_2 c_1 c_2}{a_1 c_1 a_2 c_2}}
        {1 + \cfrac{\frac{b_3 c_2 c_3}{a_2 c_2 a_3 c_3}}
            {\ddots + \cfrac{\frac{b_n c_n c_{n-1}}{a_n c_n a_{n-1} c_{n-1}}}
                {1 + \ddots}}}}$}
\right)$}
\end{equation}

Indeed, by defining a new pair of polynomials $a'_n = a_n c_n  ;  b'_n = b_n c_n c_{n-1}$ we get an equivalent continued fraction which converges to $\frac{c_0 p_n}{q_n}$. Clearly, since the resulting sequence $\frac{p'_n}{q'_n}$ is identical to the original one, it exhibits the same dynamics. We call this process ``Inflation by $c_n$''.

The metrics we are interested in are mostly not affected by a finite number of elements in the sequence. For example, both the convergence rate and $\delta$ discuss an overall trend as $n$ grows.
Consequently, we can initiate the sequence at different values of $n \ne 0$ without changing the latent parameters.
When expressing these transformations as modification to the continued fraction definition, we see that the \textit{limit} of the continued fraction might change due to this shift in sequence initiation, but only by a rational fractional transform.

$$
a_0 + \cfrac{b_1}
    {a_1 + \cfrac{b_2}
        {a_2 + \cfrac{b_3}
            {\ddots + \cfrac{b_n}
                {a_n + \ddots}}}} = \frac{p_n}{q_n}
 \Rightarrow
    {a_1 + \cfrac{b_2}
        {a_2 + \cfrac{b_3}
            {\ddots + \cfrac{b_n}
                {a_n + \ddots}}}} = \cfrac{b_1}{\frac{p_n}{q_n} - a_0} $$

For example, we consider the cluster of formulas related to the golden ratio shown in figure \ref{fig:clustering-results}b. A large portion of these PCFs stem from transforming the known formula for the golden ratio shown in Eq.\ref{eq:aesthetic_pcfs} via the methods aforementioned. The exact transformations are depicted in Table \ref{table:golden_ratio_pcfs}.

\begin{table}[H]
    \centering
    \caption{Continued fractions converging to linear fractional transformations of the Golden Ratio $\phi$, found using the top left cluster of Figure \ref{fig:clustering-results}b. 
    Numerous data points in this cluster exhibit identical sequence dynamics and are equivalent under the inflation and index indentation transformations. The equivalent data points create families of continued fractions in the cluster.
    Discrepancies between the calculated irrationality measure within the same family is ascribed to numerical inaccuracies, typically on the order $0.001$. However, when comparing families, discrepancies in the irrationality measure rise to a magnitude of $0.04$, suggesting potential deeper distinctions among these PCFs. } 
    \label{table:golden_ratio_pcfs}
    \vskip 0.15in
    \scriptsize
    \begin{tabular}{V{4} c|c|c|c|c V{4}}
    \noalign{\hrule height 2pt}
     $A_n$ & $B_n$ & Limit & Transformation & Irrationality measure $\delta$\\
    \noalign{\hrule height 2pt}

$1$ & $ 1$ & $\phi$ & Family's canonical form & $\delta = 1.00168$\\ \hline
$-1$ & $ 1$ & -$\phi$ & Inflation by $c_n=-1$ & $\delta = 1.00168$\\ \hline
$2$ & $ 4$ & 2$\phi$ & Inflation by $c_n=2$ & $\delta = 1.00023$\\ \hline
$-2$ & $ 4$ & -2$\phi$ & Inflation by $c_n=-2$ & $\delta = 1.00023$\\ \hline
$ n + 1$ & $  n(n+1) $ & $\phi$ & Inflation by $c_n=n+1$ & $\delta = 1.00168$\\ \hline
$-(n +1)$ & $  n(n+1)$ & -$\phi$ & Inflation by $c_n=-(n+1)$ & $\delta = 1.00168$\\ \hline
$ n + 2$ & $  (n+1)(n+2) $ & 2$\phi$ & Inflation by $c_n=n+2$  & $\delta = 1.00023$\\ \hline
$-(n + 2)$ & $  (n+1)(n+2)$ & -2$\phi$ & Inflation by $c_n=-(n+2)$  & $\delta = 1.00023$\\ \hline
$2n + 1$ & $  (2n-1)(2n+1) $ & $\phi$ & Inflation by $c_n=(2n+1)$ & $\delta = 1.00168$\\ \hline
$-(2n+1)$ & $  (2n-1)(2n+1) $ & -$\phi$ & Inflation by $c_n=-(2n+1)$ & $\delta = 1.00168$\\ \hline
$2(n + 1)$ & $  4n(n+1) $ & 2$\phi$ & Inflation by $c_n=2(n+1)$ & $\delta = 1.00023$\\ \hline
$-2(n+1)$ & $  4n(n + 1)$ & -2$\phi$ & Inflation by $c_n=-2(n+1)$ & $\delta = 1.00023$\\ 
\noalign{\hrule height 2pt}

$5$ & $ -5$ & $\phi + 2$ & Family's canonical form & $\delta = 1.00168$\\ \hline
$-5$ & $ -5$ & $-(\phi + 2)$ & Inflation by $c_n=-1$ & $\delta = 1.00168$\\ \hline
$5(n+1)$ & $  -5n(n+1)$ &  $\phi + 2$  & Inflation by $c_n=n+1$ & $\delta = 1.00168$\\ \hline
$-5(n+1)$ & $  -5n(n+1) + 0$ & $-(\phi + 2)$ & Inflation $c_n=-(n+1)$ & $\delta = 1.00168$\\
\noalign{\hrule height 2pt}

$ n + 2$ & $  n(n+3) $ & $(30\phi+6)/19$ & Family's canonical form & $\delta = 0.96967$\\ \hline
$-(n + 2)$ & $  n(n+3) $ & $-(30\phi+2)/19$ & Inflation by $c_n=-1$ & $\delta = 0.96967$\\ \hline
$n + 3$ & $  (n+1)(n+4)$ & $(30\phi+2)/11$ & Indent $n\to n+1$ & $\delta = 0.97245$\\ \hline
$-(n+3)$ & $  (n+1)(n+4) $ & $-(30\phi+2)/11$ & Indent $n\to n+1$ and inflation by $c_n=-1$& $\delta = 0.97245$\\
\noalign{\hrule height 2pt}

$n + 3$ & $  n(n+5)$ & $(750\phi + 240)/361$ & Family's canonical form & $\delta = 0.95243$\\ \hline
$-(n+3)$ & $  n(n+5)$ & $-(750\phi + 240)/361$ & Inflation by $c_n=-1$ & $\delta = 0.95243$\\ 
\noalign{\hrule height 2pt}

\end{tabular}

\end{table}

\section{Scalability to Larger Datasets} \label{appendix-scaleability}

The methodology can be scaled to larger data sets in multiple ways. For example, by extending the range of PCF coefficients from [-5, 5] to [-10, 10], the size of the data set increases by approximately 50 times. Additionally, by considering polynomials of higher degrees - such as advancing to third-degree $a_n$ and fourth degree $b_n$ with coefficients in [-5, 5] - the dataset size can be amplified by approximately 1,333 times.

To manage the substantial computational demands associated with these expansions, a three-fold solution is proposed.

First, a dynamically chosen measurement depth is implemented during evaluation, aiming for a fixed precision across all PCFs rather than maintaining a constant depth for all computations. This approach optimizes computational efficiency by adjusting the measurement depth based on the specific convergence rate of each PCF.

Second, using known equivalences of PCFs such as inflation (see Appendix \ref{appendix-equivelence of CF}), it is possible to substantially decrease the effective size of the dataset that needs to be measured. In particular, when $c_n = -1$, we observe that the sign of $a_n$ does not affect the dynamics of the sequence - only flips the sign of the limit to $-L$.
For every PCF its inflation by -1 is also contained in the current data set, and clearly will have the same dynamics-based metrics. This equivalence single handedly de-facto cuts the size of the current data set by half (to 771,963 converging formulas). 

Third, the inherently parallelizable nature of computing metrics for each formula is leveraged. The algorithm has been adapted and deployed within the Berkeley Open Infrastructure for Network Computing \citep{anderson2004boinc}, enabling parallel computation across thousands of volunteer computers. Assuming a typical contribution of approximately 1,000 BOINC volunteer cores, processing the expanded data set requires about one month of computational time.

The approaches described enable the methodology to handle larger datasets efficiently, facilitating the analysis of more complex polynomial continued fractions within practical computational limits.

\section{Clustering Visualizations} \label{appendix-clustering visualizations}

\begin{figure}[ht]
    \centering
    \includegraphics[width=0.73\linewidth]{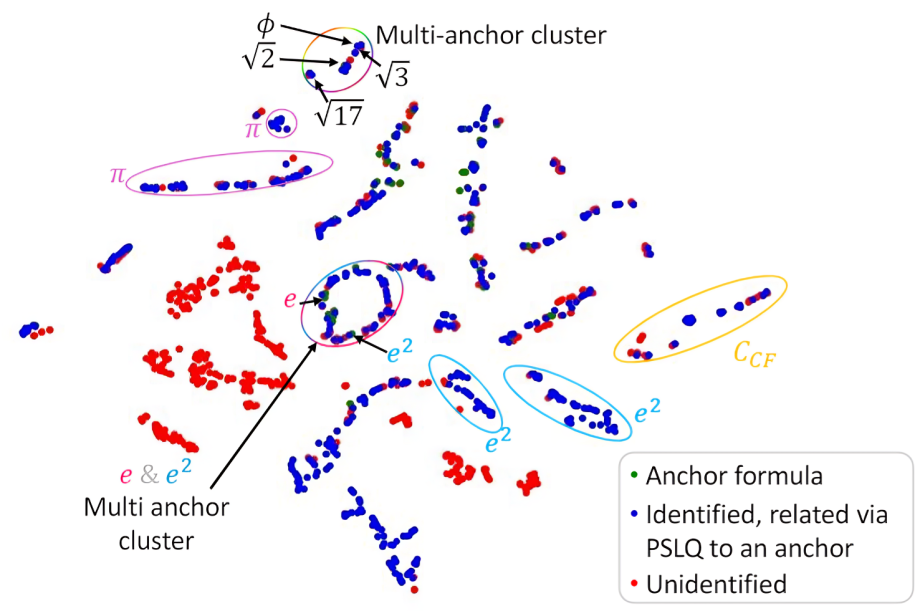}
    \caption{Automated clustering and labeling of PCFs via a 2D tSNE with perplexity $= 10$. Clusters were verified via PSLQ relations between members of the cluster and known formulas.
    For visual clarity not all points are shown and error bars are not shown, see Appendix \ref{appendix-curve fiting} for a discussion regarding measurement errors.}
    \label{fig:tSNE-clustering}
\end{figure}

Fig.\ref{fig:tSNE-clustering} and Fig.\ref{fig:Delta-Eta-clustering} show alternate clustering and 2D visualization approaches (in addition to Fig.\ref{fig:Clusters}). The same set of anchors was used for all 3 versions. The tSNE algorithm (Fig. \ref{fig:tSNE-clustering}) provides a visualization that separates well between clusters, but loses out on explainability. In Fig.\ref{fig:Delta-Eta-clustering} the opposite approach was taken - using only 2 dynamical metrics for clustering allows each cluster to be defined very clearly, but information from other dynamical metrics (which can be used for better cluster separation) is lost along the way.

Despite their differences, all 3 techniques point to similar results and conclusions, providing additional validation for our conclusions.

\begin{figure}[ht]
    \centering
    \includegraphics[width=0.98\linewidth]{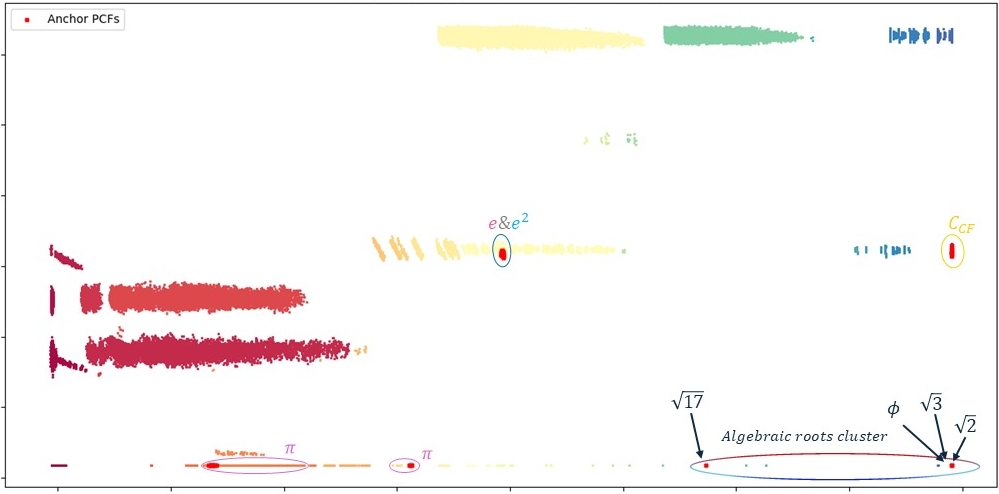}
    \caption{Automated clustering and labeling of PCFs via HDBSCAN. Only 2 dynamical metrics were used, $\delta$ and $\eta'$ (as in Fig.\ref{fig:clustering-results}a), and yet the clustering is already informative.
    For visual clarity error bars are not shown, see Appendix \ref{appendix-curve fiting} for a discussion regarding measurement errors.}
    \label{fig:Delta-Eta-clustering}
\end{figure}

\section{Analysis of the convergence rate} \label{appendix-convergence and kamidelta}

The growth rate for simple continued fraction or equivalently for
constant linear recurrences is well understood, and usually boils
down to the matrix defining the recurrence, and its eigenvalues. In
our case, the coefficient in the recurrence also depend on $n$, so
their study is more involved, however the ideas are similar, which
we now describe

\subsection{Approximating the error rate} \label{appendix-error-rate}

To find whether or not the sequence $\frac{p_{n}}{q_{n}}$ converges
and if so what is its convergence rate, we note the continued fraction
formula
\[
\cfrac{b\left(1\right)}{a\left(1\right)+\cfrac{b\left(2\right)}{a\left(2\right)+\cfrac{b\left(3\right)}{\ddots+\frac{b\left(n-1\right)}{a\left(n-1\right)+0}}}}=\frac{p_{n}}{q_{n}},
\]
can be rewritten in matrix form as 
\[
\begin{pmatrix}p_{n-1} & p_{n}\\
q_{n-1} & q_{n}
\end{pmatrix}=\prod_{1}^{n-1}\begin{pmatrix}0 & b\left(k\right)\\
1 & a\left(k\right)
\end{pmatrix}.
\]
In particular this implies that both $p_{n}$ and $q_{n}$ satisfy
the same linear recurrence:
\[
u_{n+1}=a\left(n\right)u_{n}+b\left(n\right)u_{n-1},
\]
with initial conditions
\[
\begin{pmatrix}p_{0} & p_{1}\\
q_{0} & q_{1}
\end{pmatrix}=\begin{pmatrix}1 & 0\\
0 & 1
\end{pmatrix}.
\]

Trying to determine if there is convergence, we use the Cauchy condition.
For any $m\geq n$ we have that
\[
\frac{p_{m}}{q_{m}}-\frac{p_{n}}{q_{n}}=\sum_{n}^{m-1}\left(\frac{p_{k+1}}{q_{k+1}}-\frac{p_{k}}{q_{k}}\right)=\sum_{n}^{m-1}\frac{p_{k+1}q_{k}-q_{k+1}p_{k}}{q_{k}q_{k+1}}=-\sum_{n}^{m-1}\frac{\det\begin{pmatrix}p_{k} & p_{k+1}\\
q_{k} & q_{k+1}
\end{pmatrix}}{q_{k}q_{k+1}}=-\sum_{n}^{m-1}\frac{\left(-1\right)^{k}\prod_{j=1}^{k}b\left(j\right)}{q_{k}q_{k+1}}.
\]

\begin{cor}
The sequence $\mathbb{K}_{1}^{\infty}\frac{b\left(n\right)}{a\left(n\right)}$
converges if and only if $\sum_{1}^{\infty}\frac{\prod_{j=1}^{k}b\left(j\right)}{q_{k}q_{k+1}}$
converges, and to the same limit $L$. More over, the convergence
rate is 
\[
\epsilon\left(n\right):=\left|\frac{p_{n}}{q_{n}}-L\right|=\left|\sum_{n}^{\infty}\frac{\left(-1\right)^{k}\prod_{j=1}^{k}b\left(j\right)}{q_{k}q_{k+1}}\right|.
\]
\end{cor}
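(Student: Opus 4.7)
The plan is to derive the corollary directly from the telescoping identity established immediately above the statement, using the Cauchy criterion. The key algebraic work (expanding $p_{k+1}q_k - q_{k+1}p_k$ via the determinant of the matrix product of $2\times 2$ blocks) has already been carried out, yielding
\[
\frac{p_{m}}{q_{m}}-\frac{p_{n}}{q_{n}}=-\sum_{k=n}^{m-1}\frac{(-1)^{k}\prod_{j=1}^{k}b(j)}{q_{k}q_{k+1}},
\]
so the proof essentially reduces to a careful passage to the limit in $m$.

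First I would establish the equivalence of convergence. By definition, $\mathbb{K}_{1}^{\infty}\tfrac{b(n)}{a(n)}$ converges iff the sequence $\{p_n/q_n\}$ converges, which by completeness of $\mathbb{R}$ is equivalent to it being Cauchy. Applying the telescoping identity above, the Cauchy condition $\left|\tfrac{p_m}{q_m}-\tfrac{p_n}{q_n}\right|\to 0$ as $m,n\to\infty$ is exactly the Cauchy condition for the partial sums $S_N := \sum_{k=1}^{N}\tfrac{(-1)^{k}\prod_{j=1}^{k}b(j)}{q_k q_{k+1}}$. Hence the continued fraction converges iff the series converges, and the limits agree (up to the initial offset, which vanishes when comparing $p_n/q_n$ with $L$).

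Next I would address the error formula. Assuming convergence to $L$, fix $n$ and let $m\to\infty$ in the telescoping identity; continuity of subtraction and the existence of the series limit give
\[
L - \frac{p_{n}}{q_{n}} = -\sum_{k=n}^{\infty}\frac{(-1)^{k}\prod_{j=1}^{k}b(j)}{q_{k}q_{k+1}},
\]
and taking absolute values yields the claimed expression for $\epsilon(n)$.

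The main subtlety, rather than an obstacle, is a bookkeeping issue: the series in the corollary is written without the explicit $(-1)^k$ factor that appears in the telescoping identity, whereas the equivalence of Cauchy conditions holds for the signed series. I would reconcile this either by absorbing the sign into $\prod b(j)$ when $b$ takes negative values, or by noting that the statement should be read with the alternating sign inherited from the determinant computation; either way, the equivalence follows formally from the identity already in hand, with no further analytic input required.
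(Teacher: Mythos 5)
Your proposal is correct and matches the paper's implicit argument: the corollary is an immediate consequence of the telescoping identity displayed just above it, obtained by reading that identity as a Cauchy criterion and then passing to the limit $m\to\infty$. Your observation about the missing $(-1)^k$ in the statement of the series is also well taken (the paper's error formula retains it while the convergence claim drops it, so the latter should be read with the sign absorbed via $(-1)^k\prod_{j=1}^k b(j)=\prod_{j=1}^k\bigl(-b(j)\bigr)$, which holds regardless of the sign of $b$); this is a typographical inconsistency in the corollary, not a gap in your argument.
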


This suggests that we should understand the growth rate of both $q_{k}$
and $\prod_{j=1}^{k}b\left(j\right)$.
\begin{rem}
Note that the convergence and its rate might depend on the sign of
$\frac{\left(-1\right)^{k}\prod_{j=1}^{k}b\left(j\right)}{q_{k}q_{k+1}}$.
\begin{enumerate}
\item Suppose that $\left|\frac{\left(-1\right)^{k}\prod_{j=1}^{k}b\left(j\right)}{q_{k}q_{k+1}}\right|=\frac{1}{k^{d}}$.
If the signs do not alternate, then $\left|\sum_{n}^{\infty}\frac{\left(-1\right)^{k}\prod_{j=1}^{k}b\left(j\right)}{q_{k}q_{k+1}}\right|=\sum_{n}^{\infty}\frac{1}{k^{d}}$.
This diverge if $d=1$ and has order of magnitude $\frac{1}{k^{d-1}}$
for $d>1$. However, with alternating signs we get the smaller bound
\begin{align*}
\sum_{2n}^{\infty}\frac{\left(-1\right)^{k}}{k^{d}} & =\sum_{n}^{\infty}\left(\frac{1}{\left(2k\right)^{d}}-\frac{1}{\left(2k+1\right)^{d}}\right)=\sum_{n}^{\infty}\left(\frac{\left(2k+1\right)^{d}-\left(2k\right)^{d}}{\left(2k\right)^{d}\left(2k+1\right)^{d}}\right)\sim\sum_{n}^{\infty}\frac{d\left(2k\right)^{d-1}}{4k^{2d}}\sim\frac{1}{n^{d}}.
\end{align*}
Thus, it always converges and with better rates.
\item However, for faster converging sequences we do not expect alternating
sign to affect the convergence rate. For example, if $\left|\frac{\prod_{1}^{m-1}\left(-b\left(k\right)\right)}{q_{m}q_{m-1}}\right|=\lambda^{m}$
for some $0<\lambda<1$, then with only positive signs the limit will
be $\frac{\lambda^{n}}{1+\lambda}$ while for alternating signs it
will be $\frac{\left(-\lambda\right)^{n}}{1+\lambda}$, so in any
case the convergence rate is exponential.
\end{enumerate}
\end{rem}

\subsection{Growth rate of \texorpdfstring{$\prod_{k=1}^{m-1}\left|b\left(k\right)\right|$}{Growthratebk}}

\begin{claim}
\label{claim:product_of_b}Let $b\left(x\right)$ be a polynomial
of degree $d$, with leading coefficient of absolute value $B$. Then
there exists a constant $C>0$ such that for any integer $N$ we have
\[
\left(Ne\right)^{-C}\leq\prod_{k=1}^{N}\left|\frac{b\left(k\right)}{Bk^{d}}\right|\leq\left(Ne\right)^{C}.
\]
\end{claim}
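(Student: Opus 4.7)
The plan is to factor $b$ over $\mathbb{C}$ and reduce the two-sided bound to an elementary asymptotic estimate of logarithmic sums. Writing $b(x)=B\prod_{i=1}^{d}(x-\alpha_{i})$, we have
$$\log\prod_{k=1}^{N}\left|\frac{b(k)}{Bk^{d}}\right| \;=\; \sum_{i=1}^{d}\sum_{k=1}^{N}\log\left|1-\tfrac{\alpha_{i}}{k}\right|,$$
so the claim is equivalent to showing $|S_{i}(N)|\leq C_{i}\log(Ne)$ for each inner sum $S_{i}(N)$, with constants $C_{i}$ depending only on $\alpha_{i}$. (As needed for the PCF to be non-terminating, I assume $b(k)\neq 0$ for every positive integer $k$, so each $S_{i}(N)$ is a finite real number.)

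I would then split each sum at the threshold $K_{i}:=\lceil 2|\alpha_{i}|\rceil$. The ``small $k$'' piece $\sum_{k\leq K_{i}}\log|1-\alpha_{i}/k|$ has a fixed finite number of terms (independent of $N$), each finite since $k-\alpha_{i}\neq 0$, so its contribution is $O(1)$ uniformly in $N$. In the ``large $k$'' regime $k>K_{i}$ one has $|\alpha_{i}/k|\leq 1/2$, and a second-order Taylor expansion gives
$$\log\left|1-\tfrac{\alpha_{i}}{k}\right| \;=\; \tfrac{1}{2}\log\!\left(1-\tfrac{2\,\mathrm{Re}(\alpha_{i})}{k}+\tfrac{|\alpha_{i}|^{2}}{k^{2}}\right) \;=\; -\frac{\mathrm{Re}(\alpha_{i})}{k}+O\!\left(\tfrac{1}{k^{2}}\right),$$
with implied constant depending only on $|\alpha_{i}|$. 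Summing over $K_{i}<k\leq N$ and using $\sum_{k\leq N}\tfrac{1}{k}=\log N+O(1)$ and $\sum_{k\geq 1}\tfrac{1}{k^{2}}=O(1)$ yields $S_{i}(N)=-\mathrm{Re}(\alpha_{i})\log N+O(1)$. Summing over $i$ and exponentiating gives the claim, with $C$ any constant strictly larger than $\sum_{i=1}^{d}|\mathrm{Re}(\alpha_{i})|$ plus the accumulated $O(1)$ terms.

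The main (mild) obstacle is controlling the small-$k$ piece: it is bounded only because $b$ has no positive integer root, which is exactly the non-termination hypothesis for the PCF and should probably be made explicit in the statement. A slicker alternative route bypasses the Taylor expansion altogether by using the identity $\prod_{k=1}^{N}(1-\alpha/k)=\Gamma(N+1-\alpha)/\bigl(\Gamma(1-\alpha)\,N!\bigr)$ together with Stirling's formula, which immediately yields $\prod_{k=1}^{N}|1-\alpha/k|=\Theta(N^{-\mathrm{Re}(\alpha)})$, and hence the two-sided polynomial bound with sharp constants.
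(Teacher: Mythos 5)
Your proof is correct, and it takes a genuinely different route from the paper's. The paper works directly with $\tilde{b}(k)=b(k)/(Bk^{d})=1+\sum_{j<d}\tfrac{b_{j}}{B}k^{j-d}$ as a sum of decaying terms, sandwiches it between $1\pm C_{0}/k$ for an explicit integer $C_{0}$, and then invokes a self-contained lemma (Equation~\eqref{eq:product(1+c/k)}) bounding $\sum\ln(1+C'/k)$ against $C'\ln N$; no factorization over $\mathbb{C}$ is used, and the logarithm of the product is only bounded in absolute value, never resolved to leading order. You instead factor $b$ over $\mathbb{C}$, Taylor-expand $\log|1-\alpha_{i}/k|$ root by root, and identify the exact coefficient $-\mathrm{Re}(\alpha_{i})$ of $\log N$, yielding the sharper conclusion $\prod_{k\le N}|b(k)/(Bk^{d})|=\Theta\bigl(N^{-\mathrm{Re}(\sum\alpha_{i})}\bigr)$, from which the claim's two-sided bound follows immediately; your Gamma-function alternative reaches the same asymptotic in one line. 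The trade-off: the paper's argument is more elementary (no complex roots, no per-root Taylor bookkeeping) and exactly as strong as the claim requires, while yours is slightly heavier but delivers the precise growth exponent for free. Both proofs share the same tacit hypothesis, which you correctly flagged: one must assume $b(k)\neq 0$ for all positive integers $k$ (so the small-$k$ terms are finite), an assumption the paper's proof also uses implicitly when extending the summation back to $k=1$ but which is left out of the statement of the claim.
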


\begin{proof}
We may assume that the leading coefficient of $b$ is positive. Writing
$b\left(x\right)=\sum_{0}^{d}b_{j}x^{j}$ with $b_{d}=B\neq0$, we
want to approximate the product (of the absolute value) of
\[
\tilde{b}\left(k\right)=1+\sum_{0}^{d-1}\frac{b_{j}}{B}\frac{1}{k^{d-j}}.
\]

Hence, we can find an integer constant $C_{0}\geq1$ such that for
all $k\geq1$ we have
\[
\left(1-\frac{C_{0}}{k}\right)\leq\left|\tilde{b}\left(k\right)\right|\leq\left(1+\frac{C_{0}}{k}\right).
\]
For all $k$ large enough, all the expression above are positive,
so we get
\[
\ln\left(1-\frac{C_{0}}{k}\right)\leq\ln\left|\tilde{b}\left(k\right)\right|\leq\ln\left(1+\frac{C_{0}}{k}\right).
\]

With the goal of summing up these expressions from $1$ to infinity, we claim that there is some constant $M>0$ such that for any $C'\in\RR$,
and $2\left|C'\right|\leq n<N$ we have that 
\begin{equation}
\left|\sum_{n}^{N}\ln\left(1+\frac{C'}{k}\right)-C'\ln\left(\frac{N}{n-1}\right)\right|\leq M.\label{eq:product(1+c/k)}
\end{equation}

Given this claim we conclude that 
\[
-\left(C_{0}\ln\left(N\right)+\left[M-C_{0}\ln\left(2C_{0}\right)\right]\right)\leq\sum_{k=2C_{0}+1}^{N}\ln\left|\frac{b\left(k\right)}{Bk^{d}}\right|\leq C_{0}\ln\left(N\right)+\left[M-C_{0}\ln\left(2C_{0}\right)\right].
\]
For another $C$ large enough (independent of $N$), we can start
the summation from $k=1$ to get
\[
-C\left(\ln\left(N\right)+1\right)\leq\sum_{k=1}^{N}\ln\left|\tilde{b}\left(k\right)\right|\leq C\left(\ln\left(N\right)+1\right).
\]
Finally, exponenting it back we get the result we wanted:
\[
\left(Ne\right)^{-C}\leq\prod_{k=1}^{N}\left|\tilde{b}\left(k\right)\right|\leq\left(Ne\right)^{C}.
\]

We are left to prove Equation~\eqref{eq:product(1+c/k)}.

Using the Taylor expansion of $\ln\left(1+x\right)$ for $\left|x\right|\leq\frac{1}{2}$,
we know that there is some large enough $0<M_{0}$ such that 
\[
\left|\ln\left(1+x\right)-x\right|\leq M_{0}x^{2}.
\]
It follows that for $2\left|C'\right|\leq n<N$ we have
\[
\left|\sum_{k=n}^{N}\left(\ln\left(1+\frac{C'}{k}\right)-\frac{C'}{k}\right)\right|\leq M_{0}C'^{2}\sum_{n}^{N}\frac{1}{k^{2}}\leq M_{0}C'^{2}\zeta\left(2\right).
\]
In addition, we have that $\left|\sum_{n}^{N}\frac{1}{k}-\int_{n-1}^{N}\frac{1}{x}\dx\right|\leq1$,
and 
\[
\int_{n-1}^{N}\frac{1}{x}\dx=\ln\left(\frac{N}{n-1}\right).
\]

Therefore 
\[
\left|\sum_{n}^{N}\ln\left(1+\frac{C'}{k}\right)-C'\ln\left(\frac{N}{n-1}\right)\right|\leq\left|C'\right|+M_{0}C'^{2}\zeta\left(2\right)
\]
is uniformly bounded.
\end{proof}

\subsection{Growth rate of \texorpdfstring{$q_{n}$}{qn}}

The sequence $q_{n}$ satisfies the linear recurrence
\[
q_{n+1}=a\left(n\right)q_{n}+b\left(n\right)q_{n-1},
\]
or in matrix form 
\[
\left(q_{n},q_{n+1}\right)=\left(q_{n-1},q_{n}\right)\overbrace{\begin{pmatrix}0 & b\left(n\right)\\
1 & a\left(n\right)
\end{pmatrix}}^{M\left(n\right)}.
\]
If both $a\left(x\right),b\left(x\right)$ are constant, and therefore
$M=M\left(n\right)$ is a constant matrix, then this problem reduces
to simply $\left(q_{n},q_{n+1}\right)=\left(q_{0},q_{1}\right)M^{n}$.
Its a standard exercise to approximate $q_{n}$ using the eigenvectors
decomposition of $M$. However, in general not only $M\left(n\right)$
is non-constant, its entries have different orders of magnitude. 

Thus, we would like to move to an ``equivalent'' system where at
the very least $M\left(n\right)$ converges to some matrix $M_{\infty}$,
and then hope to show that the behavior of $q_{n}$ can be read from
the system with $M_{\infty}^{n}$. This equivalent system will be
built in two steps: first we ``balance'' the matrix, so its coordinates
growth rate are the same, and then taking it outside as a scalar,
the remaining sequence of matrices will converge.

\subsubsection{Matrix balancing}

This balancing is split into two cases according to the degrees of
$d_{a}=\deg\left(a\left(x\right)\right),d_{b}=\deg\left(b\left(x\right)\right)$. 

Let $d=\max\left\{ d_{a},\frac{1}{2}d_{b}\right\} $ and denote by
$A,B$ the coefficients of $x^{d},x^{2d}$ of $a\left(x\right),b\left(x\right)$
respectively. Note that both $A,B$ are either the corresponding leading
coefficients or zero, depending on whether $d_{a}=d$, respectively
$d_{b}=2d$. If $2d_{a}<d_{b}$ and $d_{b}$ is odd, then $d_{a}<d=\frac{d_{b}}{2}$,
and we still consider $A$ to be zero. Regardless of the choice
of $d$, we see that at least one of $A$ or $B$ is not zero (and
both if $2d_{a}=d_{b}$, which we call a ``balanced'' PCF).

With this choice, taking $\tilde{q}_{n}=\frac{q_{n}}{\left(n!\right)^{d}}$,
we obtain the linear recurrence
\[
\tilde{q}_{n+1}=\frac{a\left(n\right)}{\left(n+1\right)^{d}}\tilde{q}_{n}+\frac{b\left(n\right)}{\left(n\left(n+1\right)\right)^{d}}\tilde{q}_{n-1}.
\]
Letting $\tilde{a}\left(n\right)=\frac{a\left(n\right)}{\left(n+1\right)^{d}}$
and $\tilde{b}\left(n\right)=\frac{b\left(n\right)}{\left(n\left(n+1\right)\right)^{d}}$,
by our choice of $d$ we see that the coefficient or the recurrence
converge, and not both to zero: 
\begin{align*}
\limfi n\tilde{a}\left(n\right) & =A\\
\limfi n\tilde{b}\left(n\right) & =B.
\end{align*}
Here too we can also write it in a matrix form, namely
\[
\left(\tilde{q}_{n},\tilde{q}_{n+1}\right)=\left(\tilde{q}_{n-1},\tilde{q}_{n}\right)\begin{pmatrix}0 & \tilde{b}\left(n\right)\\
1 & \tilde{a}\left(n\right)
\end{pmatrix}.
\]

We now have a limit matrix, and the dynamics of such a matrix is well
known. If both eigenvalues are real which are distinct in absolute
value, then we expect exponential convergence. If both are non real,
and therefore complex conjugate we expect it to behave like a rotation,
and therefore will not converge. In both of these cases, since the
eigenvalues are distinct in the limit, this holds for almost all $n$,
so this behavior should hold in general.

In the discriminant zero, the situation is much more delicate, since
we can converge to zero in many ways. For example, the discriminant
along the way can be negative, positive or zero. In this notes we
will restrict the study only to the two real eigenvalues with different
absolute values.

\subsubsection{\label{app:Asymptotics-of-recurrence}Asymptotics of the continued fraction recurrence}

The main goal of this section is to approximate the growth rate of
a solution $u_{n}$ to the recurrence 
\[
u_{n+1}=a_{n}u_{n}+b_{n}u_{n-1},
\]
where both $a_{n},b_{n}$ converge (and not both to zero) or in matrix
form 
\[
\left(\begin{smallmatrix}v_{n} & v_{n+1}\end{smallmatrix}\right)=\left(\begin{smallmatrix}v_{n-1} & v_{n}\end{smallmatrix}\right)M_{n}\quad,\quad M_{n}=\left(\begin{smallmatrix}0 & b_{n}\\
1 & a_{n}
\end{smallmatrix}\right),
\]
where $M_{n}\to M:=\begin{pmatrix}0 & b\\
1 & a
\end{pmatrix}$.

The first step is the standard conjugation to a simpler matrix. Indeed,
if $D=PMP^{-1}$ is simpler, e.g. diagonal, then $D_{n}:=PM_{n}P^{-1}\to D$
, and $\prod_{1}^{n}M_{i}=P^{-1}\prod_{1}^{n}D_{i}P$, so we more
or less need to understand $\prod_{1}^{n}D_{i}$.

In the constant diagonal case $D_{n}=\begin{pmatrix}\lambda_{1} & 0\\
0 & \lambda_{2}
\end{pmatrix}$ with $\lambda_{1}>\left|\lambda_{2}\right|$, we expect that for
almost every initial condition $\norm{\left(\alpha_{1},\beta_{1}\right)D^{k}}\sim\lambda_{1}^{k}$.
This is true as long as the initial vector is not in $\RR\cdot e_{2}$,
and we have similar behaviour for other type of matrices. When the
$D_{n}$ are not constant, we need to take a little bit more care.
The image you should have in mind is the following:

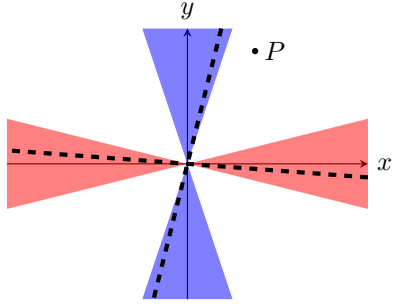
\begin{figure}
\begin{centering}
\begin{tikzpicture}[scale=0.6]
\tzaxes(-4,-3)(4,3){$x$}{$y$}
\fill[fill=blue, opacity=0.5] (0,0) -- (1,3) -- (-1,3) -- cycle; 
\fill[fill=blue, opacity=0.5] (0,0) -- (1,-3) -- (-1,-3) -- cycle; 
\fill[fill=red, opacity=0.5] (0,0) -- (4,1) -- (4,-1) -- cycle; 
\fill[fill=red, opacity=0.5] (0,0) -- (-4,1) -- (-4,-1) -- cycle; 
\draw[ultra thick, dashed, black] (0.75,3) -- (-0.75,-3);
\draw[ultra thick, dashed, black] (4,-0.3) -- (-4,0.3);
\fill[black] (1.5,2.5) circle (2pt);
\node[right, black] at (1.5,2.5) {$P$};
\end{tikzpicture}
\par\end{centering}
\caption{Convergence with variable coefficients}
\end{figure}
Instead of the two eigenvectors being on the $X$ and $Y$ axes, they
only converge to it, so we only know that they are somewhere inside
the red and blue regions. Thus, to understand this system we first
need a \textbf{separation condition} saying that these regions are
disjoint. Assuming the $X$-axis is the pulling axis (larger eigenvalue),
we will need at least one point outside the error region around the
$Y$ axis, which we call the \textbf{initial condition}. Once both
these conditions hold, a standard investigation of diagonalizable
product will show that the point's orbit converge towards the eigenvector
in the $X$-region. As this region shrinks to $X$ in the limit, we
see that the limit of the orbit is there as well.
\begin{lem}
\label{lem:Diagonal-convergence}Suppose that $D_{n}\to D$ where
$D=\left(\begin{smallmatrix}\lambda_{+} & 0\\
0 & \lambda_{-}
\end{smallmatrix}\right)$ with $0\leq\left|\frac{\lambda_{-}}{\lambda_{+}}\right|<1$ and let
$\kappa_{n}=\frac{1}{\left|\lambda_{+}\right|}\max_{k\ge n}\norm{D_{k}-D}_{\infty}$.

Fix some initial $z_{1}$ and let $z_{k}=\left(z_{1}\right)\prod_{1}^{k-1}D_{n}$.
Assuming that for some $n$ we have 
\begin{itemize}
\item \textbf{Separation condition}: $\left|\frac{\lambda_{-}}{\lambda_{+}}\right|+4\kappa_{n}<1$
and 
\item \textbf{Initial condition}: $\left|z_{n}\right|<\frac{\mu_{n}+\sqrt{\left(\mu_{n}-2\kappa_{n}\right)\left(\mu_{n}+2\kappa_{n}\right)}}{2\kappa_{n}},\;\mu_{n}=1-\left|\frac{\lambda_{-}}{\lambda_{+}}\right|-2\kappa_{n}$. 
\end{itemize}
Then $\limfi k\left|z_{k}\right|=0$.

\end{lem}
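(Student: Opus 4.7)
The plan is to reduce the two-dimensional matrix product to the scalar dynamics of its projective ratio and show that, under the stated hypotheses, the induced Möbius map is an eventual strict contraction that drives this ratio to $0$. Write $v_k=z_1\prod_{1}^{k-1}D_i$ for the row vector obtained after $k-1$ iterations and let $w_k$ denote its second coordinate divided by its first; the interpretation is that $|z_k|=|w_k|$ measures how far $v_k$ has drifted from the attracting eigenline $\RR e_1$, and the claim $|z_k|\to 0$ asserts that the orbit aligns with this line.

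Writing $D_k=D+E_k$ with $\norm{E_k}_\infty\leq|\lambda_+|\kappa_n$ for all $k\geq n$, the recursion $v_{k+1}=v_kD_k$ induces the Möbius map
\begin{equation*}
w_{k+1}=\Phi_k(w_k)=\frac{(E_k)_{12}+(\lambda_-+(E_k)_{22})\,w_k}{(\lambda_++(E_k)_{11})+(E_k)_{21}\,w_k},
\end{equation*}
which collapses in the unperturbed limit to $w\mapsto(\lambda_-/\lambda_+)\,w$. Applying the triangle inequality entry-wise yields the bound
\begin{equation*}
|\Phi_k(w)|\leq\frac{|\lambda_-/\lambda_+|\,|w|+\kappa_n(1+|w|)}{1-\kappa_n(1+|w|)},
\end{equation*}
valid as long as the denominator is positive. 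Imposing $|\Phi_k(w)|\leq|w|$ and clearing the positive denominator reduces to the quadratic inequality $\kappa_n|w|^2-\mu_n|w|+\kappa_n\leq 0$ with $\mu_n=1-|\lambda_-/\lambda_+|-2\kappa_n$, whose discriminant $\mu_n^2-4\kappa_n^2=(\mu_n-2\kappa_n)(\mu_n+2\kappa_n)$ is strictly positive exactly because the separation condition $|\lambda_-/\lambda_+|+4\kappa_n<1$ is equivalent to $\mu_n>2\kappa_n$. The two positive roots
\begin{equation*}
r_\pm=\frac{\mu_n\pm\sqrt{(\mu_n-2\kappa_n)(\mu_n+2\kappa_n)}}{2\kappa_n}
\end{equation*}
sandwich the contracting annulus, and $r_+$ is precisely the right-hand side of the stated initial condition.

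The initial condition $|w_n|<r_+$ is then propagated by induction. On the annulus $r_-\leq|w|\leq r_+$ the inequality $|\Phi_k(w)|\leq|w|$ holds, while on the inner disk $\{|w|\leq r_-\}$ direct substitution at $w=0$ gives $|\Phi_k(0)|\leq\kappa_n/(1-\kappa_n)<r_-$, so the image stays inside the combined disk $\{|w|\leq r_+\}$ in both cases. Hence the orbit is trapped below $r_+$ forever, $|w_k|$ is eventually monotone non-increasing, and it settles into $\{|w|\leq r_-\}$. Finally, because $\kappa_m$ is non-increasing in $m$ and tends to $0$ (as $D_k\to D$), the inner radius $r_-^{(m)}\sim\kappa_m/(1-|\lambda_-/\lambda_+|)$ tends to $0$; restarting the trapping argument from each successively larger $m$ shows that for every $\varepsilon>0$ the orbit is eventually contained in $\{|w|<\varepsilon\}$, so $|z_k|=|w_k|\to 0$.

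The main obstacle is the quadratic bookkeeping in the second paragraph: matching the roots $r_\pm$ of the contraction inequality to the precise threshold in the stated initial condition, and ensuring that the induction does not accidentally kick the orbit outside $\{|w|<r_+\}$ while passing through $\{|w|<r_-\}$. The remaining ingredients---the projective reduction, the crude triangle-inequality bound, and the ``diagonal'' finishing argument that $r_-^{(m)}\to 0$ forces the orbit to $0$---are routine.
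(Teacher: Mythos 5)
Your proposal follows the same architecture as the paper's proof: project to the scalar ratio $w_k$, derive the same triangle-inequality bound $|w_{k+1}|\leq g(|w_k|)$ with $g(t)=\bigl(|\lambda_-/\lambda_+|\,t+\kappa(1+t)\bigr)/\bigl(1-\kappa(1+t)\bigr)$, reduce $g(t)\leq t$ to the quadratic $\kappa t^2-\mu t+\kappa\leq0$ with roots $r_\pm=\nu_\pm$, prove forward invariance of $\{|w|<r_+\}$ together with $\limsup|w_k|\leq r_-$, and then run the diagonal restart over $m\to\infty$ using $r_-^{(m)}\to0$. Where you diverge is in how the last two facts are extracted from the quadratic. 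The paper observes that $g$ itself is a Möbius map with matrix $M_\varepsilon=\bigl(\begin{smallmatrix}|\lambda|+\kappa&\kappa\\-\kappa&1-\kappa\end{smallmatrix}\bigr)$, diagonalizes it by the Möbius change of variable $T^{-1}(z)=-\frac{z-\nu_-}{z-\nu_+}$, and obtains the linear contraction $T^{-1}(g(t))=\tfrac{\gamma_-}{\gamma_+}T^{-1}(t)$; this simultaneously yields forward invariance, the $\limsup$ bound, and an explicit geometric rate, with essentially no further case analysis. You instead argue directly from the sign of the quadratic and the monotone behaviour of $g$; this is more elementary, but delivers the $\limsup$ bound only after separately handling the cases where the orbit does or does not reach $[0,r_-]$, and with no rate.

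There is also a small gap in your forward-invariance step: checking $|\Phi_k(0)|\leq\kappa/(1-\kappa)<r_-$ alone does not show that the inner disk $\{|w|\leq r_-\}$ maps into itself (or even into $\{|w|\leq r_+\}$). What you need, and implicitly use, is that $g$ is monotone increasing on $[0,\kappa^{-1}-1)$ — true because its Möbius matrix has positive determinant $(|\lambda|+\kappa)(1-\kappa)+\kappa^2>0$ — so that $t\leq r_-$ implies $g(t)\leq g(r_-)=r_-$. The same monotonicity is what actually justifies ``the orbit is trapped below $r_+$ forever'': if $|w_k|<r_+$ then $g(|w_k|)<g(r_+)=r_+$. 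You flag this worry at the end as an obstacle to be handled, but the body of the argument substitutes only at $w=0$, which is not sufficient; spelling out monotonicity closes the gap and makes the proof complete.
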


\begin{rem}
Note that $\frac{\mu_{n}+\sqrt{\left(\mu_{n}-2\kappa_{n}\right)\left(\mu_{n}+2\kappa_{n}\right)}}{2\kappa_{n}}\sim\frac{1-\left|\lambda\right|}{\kappa_{n}}\to\infty$
as $\kappa_{n}\to0$, so this initial condition becomes easier to
satisfy as $n\to\infty$.
\end{rem}

\begin{proof}
First, proving the claim for $\frac{1}{\lambda_{+}}D_{n}$ instead
of $D_{n}$, we may assume that the limit is $D=\begin{pmatrix}1 & 0\\
0 & \lambda
\end{pmatrix}$ where $\lambda=\frac{\lambda_{-}}{\lambda_{+}}$.

Next, note that whenever $\mu:=1-\left|\lambda\right|-2\kappa>2\kappa>0$,
we have that $\sqrt{\left(\mu-2\kappa\right)\left(\mu+2\kappa\right)}=\sqrt{\mu^{2}-4\kappa^{2}}<\mu$.
Setting 
\[
\nu_{\pm}\left(\kappa\right)=\frac{\mu_{\kappa}\pm\sqrt{\left(\mu_{\kappa}-2\kappa\right)\left(\mu_{\kappa}+2\kappa\right)}}{2\kappa},
\]
we get that $0<\nu_{-}\left(\kappa\right)<\nu_{+}\left(\kappa\right)$
are real numbers, and the condition in the assumption is $\left|z_{n}\right|<\nu_{+}\left(\kappa_{n}\right)$.
Our main goal is to prove our process satisfies:
\begin{enumerate}
\item $\left|z_{k}\right|<\nu_{+}\left(\kappa_{n}\right)$ for all $k\geq n$
and,
\item We have $\limsup_{k}\left|z_{k}\right|\leq\nu_{-}\left(\kappa_{n}\right).$
\end{enumerate}
Assuming these two steps are true, the full proof is not too far behind.
Indeed, since $D_{n}\to D$, the sequence $\kappa_{n}:={\displaystyle \sup_{k\geq n}}\norm{D_{n}-D}$
converges to zero, and note that as $\kappa_{n}\to0$ we get that
$\nu_{+}\left(\kappa_{n}\right)\nearrow\infty$ and $\nu_{-}\left(\kappa_{n}\right)\searrow0$.
Assuming step $\left(1\right)$, for $k\geq m\geq n$ we have $\left|z_{k}\right|<\nu_{+}\left(\kappa_{n}\right)\leq\nu_{+}\left(\kappa_{m}\right)$,
and by step $\left(2\right)$ we get that $\limsup_{k}\left|z_{k}\right|\leq\nu_{-}\left(\kappa_{m}\right)\to0$.

\medskip{}

For the remaining of the proof, without loss of generality we may
assume that $n=1$ and just write $\kappa,\mu$ instead of $\kappa_{n},\mu_{n}$.

To prove these two steps, consider the change from $z_{k}$ to $z_{k+1}$.
Writing $D_{k}=\left(\begin{smallmatrix}1+\varepsilon_{1,1} & \varepsilon_{1,2}\\
\varepsilon_{2,1} & \lambda+\varepsilon_{2,2}
\end{smallmatrix}\right)$, since $z_{k+1}=\left(z_{k}\right)D_{k}$ and $\norm{D-D_{k}}_{\infty}\leq\kappa$,
we get that

\[
\left|z_{k+1}\right|=\left|\frac{\varepsilon_{1,2}+z_{k}\left(\lambda+\varepsilon_{2,2}\right)}{\left(1+\varepsilon_{1,1}\right)+z_{k}\varepsilon_{2,1}}\right|\leq\frac{\kappa+\left|z_{k}\right|\left(\left|\lambda\right|+\kappa\right)}{1-\kappa-\kappa\left|z_{k}\right|}.
\]
Note that the final denominator is positive, so that the last inequality
is valid. Indeed, using the conditions of the claim we get
\[
1-\kappa\left(1+\left|z_{k}\right|\right)\geq1-\kappa\left(1+\frac{\mu+\sqrt{\left(\mu-2\kappa\right)\left(\mu+2\kappa\right)}}{2\kappa}\right)>1-\left(\kappa+\mu\right)=\left|\lambda\right|+\kappa>0.
\]

Thus, we can rewrite the inequality as
\begin{equation}
\left|z_{k+1}\right|\leq M_{\varepsilon}\left(\left|z_{k}\right|\right),\quad M_{\varepsilon}=\begin{pmatrix}\left|\lambda\right|+\kappa & \kappa\\
-\kappa & 1-\kappa
\end{pmatrix}.\label{eq:mobius_inequality}
\end{equation}
The goal now is to show that if $\left|z_{k}\right|$ is ``large'',
then $\left|z_{k+1}\right|$ is much smaller, and if $\left|z_{k}\right|$
is small, then $\left|z_{k+1}\right|$ cannot increase too much.

\medskip{}

A simple computations shows that the eigenvalues of this matrix are
\[
\gamma_{\pm}=\frac{\left|\lambda\right|+1\pm\sqrt{\left(\mu+2\kappa\right)\left(\mu-2\kappa\right)}}{2},
\]
and since $\sqrt{\left(\mu+2\kappa\right)\left(\mu-2\kappa\right)}\leq\mu\leq1-\left|\lambda\right|$,
we get that 
\[
\gamma_{+}>\gamma_{-}>0.
\]
Finally, the corresponding (right) eigenvectors are
\[
v_{\pm}=\begin{pmatrix}\nu_{\mp}\\
1
\end{pmatrix}.
\]

To simplify the notations, let us conjugate by the matrix $T=\begin{pmatrix}\nu_{+} & \nu_{-}\\
1 & 1
\end{pmatrix}$ to obtain
\[
T^{-1}M_{\varepsilon}T=\begin{pmatrix}\gamma_{-} & 0\\
0 & \gamma_{+}
\end{pmatrix}.
\]
Note that the Mobius map 
\[
T^{-1}\left(z\right):=\frac{1}{\nu_{+}-\nu_{-}}\begin{pmatrix}1 & -\nu_{-}\\
-1 & \nu_{+}
\end{pmatrix}\left(z\right)=-\frac{z-\nu_{-}}{z-\nu_{+}}=-1+\frac{\nu_{-}-\nu_{+}}{z-\nu_{+}}
\]
sends $\nu_{-}\mapsto0$, $\nu_{+}\mapsto\infty$ and $0\mapsto-\frac{\nu_{-}}{\nu_{+}}<0$.
In particular, it is monotone increasing on $[0,\nu_{+})$, so that
our two steps from above are equivalent to
\begin{enumerate}
\item $T^{-1}\left(\left|z_{k}\right|\right)\in[-\frac{\nu_{-}}{\nu_{+}},\infty)$,
\item $\limsup_{k}T^{-1}\left(\left|z_{k}\right|\right)\in\left[-\frac{\nu_{-}}{\nu_{+}},0\right],$
\end{enumerate}
and the claim's original assumption is that $T^{-1}\left(\left|z_{1}\right|\right)\in[-\frac{\nu_{-}}{\nu_{+}},\infty)$.
However, now this claim is simple, since in these notations we get
that 
\[
T^{-1}\left(M_{\varepsilon}\left(\left|z_{k}\right|\right)\right)=\left(T^{-1}M_{\varepsilon}T\right)\left(T^{-1}\left(\left|z_{k}\right|\right)\right)=\frac{\gamma_{-}}{\gamma_{+}}\cdot T^{-1}\left(\left|z_{k}\right|\right),
\]
and $0<\frac{\gamma_{-}}{\gamma_{+}}<1$. Thus, if $T^{-1}\left(\left|z_{k}\right|\right)\in[-\frac{\nu_{-}}{\nu_{+}},\infty)$,
then so is $T^{-1}\left(M_{\varepsilon}\left(\left|z_{k}\right|\right)\right)\in[-\frac{\nu_{-}}{\nu_{+}},\infty)$,
so by Equation~\eqref{eq:mobius_inequality} and the monotonicity of $T$, we obtain
that 
\[
T^{-1}\left(\left|z_{k+1}\right|\right)\leq\frac{\gamma_{-}}{\gamma_{+}}\cdot T^{-1}\left(\left|z_{k}\right|\right),
\]
which implies the two steps.
\end{proof}

Returning back to the recursion, we get the following.

\begin{thm}
\label{thm:growthRateLimit}Suppose that we have a solution to the
recurrence $v_{n+1}=a_{n}v_{n}+b_{n}v_{n-1}$, where $a_{n}\to a,b_{n}\to b$
and suppose that $\lambda_{\pm}$ are the roots of $x^{2}=ax+b$ with
$0\leq\left|\lambda_{-}\right|<\lambda_{+}$. Writing $\kappa_{n}'=\frac{1}{\left|\lambda_{+}\right|}{\displaystyle \max_{k\geq n}}\max\left\{ \left|a_{k}-a\right|,\left|b_{k}-b\right|\right\} $
and $C\left(\lambda_{\pm}\right):=\frac{1+\left|\lambda_{+}\right|}{\left|\lambda_{+}-\lambda_{-}\right|}$,
Assume that for some $n$ we have 
\end{thm}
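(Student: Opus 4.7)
The plan is to reduce this recurrence-theoretic statement to the diagonal/matrix dynamical result of Lemma \ref{lem:Diagonal-convergence} via a single explicit conjugation, and then read off the growth rate of $v_n$ from the convergence of a Möbius-like coordinate ratio. I would start by writing the recurrence in matrix form as $(v_n,v_{n+1})=(v_{n-1},v_n)M_n$ with $M_n=\bmat{0}{b_n}{1}{a_n}\to M=\bmat{0}{b}{1}{a}$. By hypothesis $M$ has two real eigenvalues $\lambda_+,\lambda_-$ of distinct absolute value, so $M$ is diagonalizable, which means the matrices $M_n$ are, for all $n$ sufficiently large, uniformly close to a diagonalizable matrix.

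Next I would diagonalize $M$ by its left eigenvectors $(1,\lambda_\pm)$: set $P=\bmat{1}{\lambda_+}{1}{\lambda_-}$, so $PMP^{-1}=D=\mathrm{diag}(\lambda_+,\lambda_-)$, and define $D_n=PM_nP^{-1}\to D$. Introducing the transformed sequence $w_n=(v_{n-1},v_n)P^{-1}$, the recursion becomes $w_{n+1}=w_n D_n$, and the key quantity to track is the ratio $z_n$ of the second coordinate of $w_n$ to its first coordinate, which measures the deviation from the dominant eigendirection. A direct manipulation of $w_{n+1}=w_nD_n$ shows that $z_{n+1}$ is obtained from $z_n$ by exactly the Möbius-type dynamical system analysed in Lemma~\ref{lem:Diagonal-convergence}, driven by the entries of $D_n$.

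To bring the theorem's data $\kappa'_n$ and $C(\lambda_\pm)$ into the picture, I would compute $D_n-D=P(M_n-M)P^{-1}$ explicitly. Since $M_n-M$ has only its right column non-zero (with entries $b_n-b$ and $a_n-a$), and $P^{-1}$ carries a factor $(\lambda_--\lambda_+)^{-1}$, a routine norm estimate yields
\begin{equation*}
\|D_n-D\|_\infty\;\le\;\frac{1+|\lambda_+|}{|\lambda_+-\lambda_-|}\max\{|a_n-a|,\,|b_n-b|\}\;=\;|\lambda_+|\cdot C(\lambda_\pm)\cdot\kappa'_n.
\end{equation*}
Thus the quantity $\kappa_n$ appearing in Lemma~\ref{lem:Diagonal-convergence} (when that lemma is applied to $D_n\to D$) is bounded by $C(\lambda_\pm)\kappa'_n$, which is precisely the combination featured in the theorem's hypotheses. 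Substituting this bound into the separation and initial conditions of the lemma recovers the theorem's hypotheses verbatim, and the lemma then asserts $z_n\to 0$, i.e.\ the trajectory $w_n$ asymptotically aligns with the dominant $\lambda_+$-eigendirection of $D$.

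Finally, reading off the growth rate of $v_n$ from $z_n\to 0$ is, I expect, the most delicate step. Since the first-coordinate dynamics is $(w_{n+1})_1=(w_n)_1(D_n)_{11}+(w_n)_2(D_n)_{21}=(w_n)_1\bigl((D_n)_{11}+z_n(D_n)_{21}\bigr)$, and both $(D_n)_{11}\to\lambda_+$ and $z_n(D_n)_{21}\to 0$, telescoping the log-ratios gives $(w_n)_1=\lambda_+^{\,n}\cdot e^{S_n}$ where $S_n=\sum_{k\le n}\log\bigl(1+O(\kappa'_k+|z_k|)\bigr)$. The main obstacle is to convert the lemma's qualitative statement $z_n\to 0$ into a quantitative rate strong enough to make $S_n$ converge (or at least stay bounded), so that $|v_n|\sim c|\lambda_+|^n$ with an explicit non-degeneracy coming from the initial condition. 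Once this summability is controlled --- most cleanly by observing that the Möbius contraction factor $\gamma_-/\gamma_+$ in the proof of Lemma~\ref{lem:Diagonal-convergence} forces $|z_k|$ to inherit the summability of $\kappa'_k$ --- the theorem follows.
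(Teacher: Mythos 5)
Your reduction to Lemma~\ref{lem:Diagonal-convergence} is precisely the route the paper takes: write the recurrence as $(v_{n-1},v_n)\mapsto(v_{n-1},v_n)M_n$, conjugate by $P=\bigl(\begin{smallmatrix}1&\lambda_+\\1&\lambda_-\end{smallmatrix}\bigr)$ to get $D_n=PM_nP^{-1}\to D=\mathrm{diag}(\lambda_+,\lambda_-)$, track the ratio $z_n=\beta_n/\alpha_n$ of coordinates of $(v_{n-1},v_n)P^{-1}$, and use the estimate $\norm{D_n-D}_\infty\le C(\lambda_\pm)\norm{M_n-M}_\infty$ to convert $\kappa_n'$ and $C(\lambda_\pm)$ into the $\kappa_n$ of the lemma. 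That conjugation, that norm bound, and the identification of $z_n$ with the Möbius coordinate of the lemma are all correct and match the paper exactly.

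Where you go astray is the last paragraph. The theorem's conclusion is only $\frac{v_n}{v_{n-1}}\to\lambda_+$, not a quantitative asymptotic of the form $|v_n|\sim c|\lambda_+|^n$, so the ``delicate step'' of making $S_n=\sum\log(1+O(\kappa'_k+|z_k|))$ converge is not required at all and you should not need any summability of $\kappa'_k$ (which indeed is not assumed). Once Lemma~\ref{lem:Diagonal-convergence} delivers $z_n\to0$, the conclusion is immediate from the explicit Möbius relation: with $r_n=v_n/v_{n-1}$ one computes
\[
z_n=\frac{-\lambda_+v_{n-1}+v_n}{\lambda_-v_{n-1}-v_n}=-1-\frac{\lambda_+-\lambda_-}{\lambda_--r_n},
\]
so $|z_n|\to0$ forces $\lambda_--r_n\to\lambda_--\lambda_+$, i.e.\ $r_n\to\lambda_+$, and you are done. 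You also gloss over the verification that the theorem's initial condition $\bigl|\lambda_--\tfrac{v_n}{v_{n-1}}\bigr|\ge C(\lambda_\pm)\kappa_n'\,\tfrac{|\lambda_+-\lambda_-|}{1-|\lambda_-/\lambda_+|-4C(\lambda_\pm)\kappa_n'}$ actually implies the lemma's bound $|z_n|<\nu_+(\kappa_n)$: this requires chaining the triangle inequality $|z_n|\le1+\tfrac{|\lambda_+-\lambda_-|}{|\lambda_--v_n/v_{n-1}|}$ with the bound $C(\lambda_\pm)\kappa_n'\ge\kappa_n$ and the inequality $\tfrac{2\mu_n-2\kappa_n}{2\kappa_n}<\tfrac{\mu_n+\sqrt{(\mu_n-2\kappa_n)(\mu_n+2\kappa_n)}}{2\kappa_n}$; it is routine but not ``verbatim'' as you suggest, and should be spelled out.
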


\begin{itemize}
\item \textbf{Separation condition}: $\left|\frac{\lambda_{-}}{\lambda_{+}}\right|+4C\left(\lambda_{\pm}\right)\kappa_{n}'<1$
and 
\item \textbf{Initial condition}: $\left|\lambda_{-}-\frac{v_{n}}{v_{n-1}}\right|\geq C\left(\lambda_{\pm}\right)\kappa_{n}'\frac{\left|\lambda_{+}-\lambda_{-}\right|}{1-\left|\frac{\lambda_{-}}{\lambda_{+}}\right|-4C\left(\lambda_{\pm}\right)\kappa_{n}'},$
\end{itemize}
 Then 
\[
\frac{v_{n}}{v_{n-1}}\to\lambda_{+}.
\]

\begin{pro}
Set $M_{n}=\left(\begin{smallmatrix}0 & b_{n}\\
1 & a_{n}
\end{smallmatrix}\right)$ and $M=\left(\begin{smallmatrix}0 & b\\
1 & a
\end{smallmatrix}\right)$ as in the beginning of this section. With $P=\left(\begin{smallmatrix}1 & \lambda_{+}\\
1 & \lambda_{-}
\end{smallmatrix}\right)$ and $P^{-1}=\frac{1}{\lambda_{-}-\lambda_{+}}\left(\begin{smallmatrix}\lambda_{-} & -\lambda_{+}\\
-1 & 1
\end{smallmatrix}\right)$ we have that $D=PMP^{-1}=\left(\begin{smallmatrix}\lambda_{+} & 0\\
0 & \lambda_{-}
\end{smallmatrix}\right)$. We would like to apply Lemma~\ref{lem:Diagonal-convergence} to the matrices
$D_{n}=PM_{n}P^{-1}$. 

For the \textbf{separation condition} on the infinity norm, we have
\begin{align*}
\norm{PM_{n}P^{-1}-D}_{\infty} & =\norm{P\left(M_{n}-M\right)P^{-1}}_{\infty}=\frac{1}{\left|\lambda_{-}-\lambda_{+}\right|}\norm{\left(\begin{smallmatrix}1 & \lambda_{+}\\
1 & \lambda_{-}
\end{smallmatrix}\right)\left(\begin{smallmatrix}0 & b_{n}-b\\
0 & a_{n}-a
\end{smallmatrix}\right)\left(\begin{smallmatrix}\lambda_{-} & -\lambda_{+}\\
-1 & 1
\end{smallmatrix}\right)}_{\infty}\\
 & =\frac{1}{\left|\lambda_{-}-\lambda_{+}\right|}\norm{\left(\begin{smallmatrix}1 & \lambda_{+}\\
1 & \lambda_{-}
\end{smallmatrix}\right)\left(\begin{smallmatrix}b-b_{n} & b_{n}-b\\
a-a_{n} & a_{n}-a
\end{smallmatrix}\right)}_{\infty}\leq\overbrace{\frac{1+\left|\lambda_{+}\right|}{\left|\lambda_{+}-\lambda_{-}\right|}}^{C\left(\lambda_{\pm}\right)}\norm{M_{n}-M}_{\infty}.
\end{align*}
Thus, the separation condition of this theorem implies the separation
condition of Lemma~\ref{lem:Diagonal-convergence}: 
\[
\left|\frac{\lambda_{-}}{\lambda_{+}}\right|+4\kappa_{n}\leq\left|\frac{\lambda_{-}}{\lambda_{+}}\right|+4C\left(\lambda_{\pm}\right)\frac{1}{\left|\lambda_{+}\right|}\max_{k\geq n}\norm{M_{n}-M}_{\infty}<1
\]

Next, for the \textbf{initial condition }, setting
\[
\left(v_{k-1}\;v_{k}\right):=\left(v_{0}\;v_{1}\right)\left(\prod_{1}^{k-1}M_{n}\right)=\left(v_{0}\;v_{1}\right)P^{-1}\left(\prod_{1}^{k-1}D_{n}\right)P
\]
we have
\[
\left(\alpha_{k},\beta_{k}\right)=\left(v_{0}\;v_{1}\right)P^{-1}\left(\prod_{1}^{k-1}D_{n}\right)=\left(v_{k-1},v_{k}\right)P^{-1}.
\]
Setting $z_{n}=\frac{\beta_{n}}{\alpha_{n}}$, we get that

\[
\left|z_{n}\right|=\left|\frac{\beta_{n}}{\alpha_{n}}\right|=\left|\frac{-\lambda_{+}v_{n-1}+v_{n}}{\lambda_{-}v_{n-1}-v_{n}}\right|=\left|1+\frac{\lambda_{+}-\lambda_{-}}{\lambda_{-}-\frac{v_{n}}{v_{n-1}}}\right|\le1+\left|\frac{\lambda_{+}-\lambda_{-}}{\lambda_{-}-\frac{v_{n}}{v_{n-1}}}\right|=\left(*\right).
\]
Using the assumption that $\left|\lambda_{-}-\frac{v_{n}}{v_{n-1}}\right|\geq C\left(\lambda_{\pm}\right)\kappa_{n}'\frac{\left|\lambda_{+}-\lambda_{-}\right|}{1-\left|\frac{\lambda_{-}}{\lambda_{+}}\right|-4C\left(\lambda_{\pm}\right)\kappa_{n}'}\geq\kappa_{n}\frac{\left|\lambda_{+}-\lambda_{-}\right|}{1-\left|\frac{\lambda_{-}}{\lambda_{+}}\right|-4\kappa_{n}},$
we see that the expression above is
\[
\left(*\right)\leq1+\frac{\left|\lambda_{+}-\lambda_{-}\right|}{\kappa_{n}\frac{\left|\lambda_{+}-\lambda_{-}\right|}{1-\left|\frac{\lambda_{-}}{\lambda_{+}}\right|-4\kappa_{n}}}=1+\frac{1-\left|\frac{\lambda_{-}}{\lambda_{+}}\right|-4\kappa_{n}}{\kappa_{n}}=\frac{2\mu_{n}-2\kappa_{n}}{2\kappa_{n}}<\frac{\mu_{n}+\sqrt{\left(\mu_{n}-2\kappa_{n}\right)\left(\mu_{n}+2\kappa_{n}\right)}}{2\kappa_{n}}.
\]
This was the second condition needed for Lemma~\ref{lem:Diagonal-convergence}
, so we can now conclude that 
\[
\left|1+\frac{\lambda_{+}-\lambda_{-}}{\lambda_{-}-\frac{v_{n}}{v_{n-1}}}\right|=\left|\frac{\beta_{n}}{\alpha_{n}}\right|\to0
\]
which implies that $\frac{v_{n}}{v_{n-1}}\to\lambda_{+}$.
\end{pro}

\subsection{Conclusion}

We return now to the original problem with $\alpha=\mathbb{K}_{1}^{\infty}\frac{b\left(n\right)}{a\left(n\right)}$
and assume that $a\left(x\right),b\left(x\right)$ have degrees $d_{a},d_{b}$
. As mentioned before, we split our study into two cases:

\subsection*{The balanced case}

Assume that $d_{b}=2d_{a}=2d$, and let $A,B$ be the leading coefficients
of $a\left(x\right),b\left(x\right)$ respectively. 

In this case the limit matrix is $M_{\infty}=\begin{pmatrix}0 & B\\
1 & A
\end{pmatrix}$, and we assume that the roots $\lambda_{\pm}$ of $x^{2}=Ax+B$ satisfy
$0<\left|\lambda_{-}\right|<\lambda_{+}$. Using Theorem~\ref{thm:growthRateLimit}
once the two conditions hold, we obtain
\[
\frac{q_{n+1}}{q_{n}}\left(n+1\right)^{d}=\frac{q_{n+1}/\left(n+1\right)!^{d}}{q_{n}/n!^{d}}\to\lambda_{+},
\]
implying that $q_{n}=n!^{d}\lambda_{+}^{n}\exp\left(o\left(n\right)\right)$.
As for the product of the $b\left(k\right)$, using Claim~\ref{claim:product_of_b}
we have that 
\[
\prod_{k=1}^{N}\left|b\left(k\right)\right|=\exp\left(o\left(N\right)\right)\cdot B^{N}\cdot N!^{2d}.
\]
Putting them together as in the error rate expression, we get : 
\[
\frac{\prod_{k=1}^{m-1}\left|b\left(k\right)\right|}{\left|q_{m-1}q_{m}\right|}=\frac{\left|B\right|^{m-1}\cdot\left(m-1\right)!^{2d}}{\left(m-1\right)!^{d}\left(m\right)!^{d}\lambda_{+}^{2m-1}}\exp\left(o\left(m\right)\right)=\left(*\right).
\]
Note that $\left|B\right|=\left|\det\left(M_{\infty}\right)\right|=\left|\lambda_{-}\lambda_{+}\right|$,
so that the expression above is 
\[
\left(*\right)=\left|\lambda_{-}/\lambda_{+}\right|^{m}\cdot\exp\left(o\left(m\right)\right)=\exp\left(m\log\left|\lambda_{-}/\lambda_{+}\right|+o\left(m\right)\right).
\]

Thus, for given $\varepsilon>0$ where $\left|\frac{\lambda_{-}}{\lambda_{+}}\right|+\varepsilon<1$,
and for any $m$ large enough we see that $\left(*\right)\leq\left(\left|\frac{\lambda_{-}}{\lambda_{+}}\right|+\varepsilon\right)^{m-1}$.
We conclude that the error rate for all $n$ large enough is bounded
from above by
\begin{align*}
\left|\frac{p_{n}}{q_{n}}-\alpha\right| & \leq\sum_{m=n+1}^{\infty}\frac{\prod_{k=1}^{m-1}\left|b\left(k\right)\right|}{\left|q_{m-1}q_{m}\right|}\leq\sum_{m=n+1}^{\infty}\left(\left|\frac{\lambda_{-}}{\lambda_{+}}\right|+\varepsilon\right)^{m-1}=\left(\left|\frac{\lambda_{-}}{\lambda_{+}}\right|+\varepsilon\right)^{n}\frac{1}{1-\left(\left|\frac{\lambda_{-}}{\lambda_{+}}\right|+\varepsilon\right)}.
\end{align*}

It follows that 
\[
\ln\left|\frac{p_{n}}{q_{n}}-\alpha\right|\leq n\ln\left(\left(\left|\frac{\lambda_{-}}{\lambda_{+}}\right|+\varepsilon\right)\right)-\ln\left(1-\left(\left|\frac{\lambda_{-}}{\lambda_{+}}\right|+\varepsilon\right)\right)\sim n\ln\left(\left|\frac{\lambda_{-}}{\lambda_{+}}\right|\right).
\]

\subsection*{The unbalanced case}

Suppose now that $d_{b}<2d_{a}=2d$, so that $B=\limfi n\frac{b\left(n\right)}{\left(n\left(n+1\right)\right)^{d_{a}}}=0$.
This time the two roots of $x^{2}=Ax+0$ are $\lambda=0,A$. If needed,
we can use a simple continued fraction inflation $\mathbb{K}_{1}^{\infty}\frac{\left(-1\right)^{2}b\left(n\right)}{\left(-1\right)a\left(n\right)}$
and assume that $A>0$. Using Theorem~\ref{thm:growthRateLimit}, if the two
conditions hold, we obtain $q_{n}=n!^{d}A^{n}\exp\left(o\left(n\right)\right)$.

Letting $\hat{B}$ be the leading coefficient of $b\left(x\right)$
in absolute value, Claim~\ref{claim:product_of_b} implies that 
\[
\prod_{k=1}^{N}\left|b\left(k\right)\right|=\exp\left(o\left(N\right)\right)\cdot\hat{B}^{N}\cdot N!^{d_{b}}.
\]
Again, together we obtain that 
\[
\frac{\prod_{k=1}^{m-1}\left|b\left(k\right)\right|}{\left|q_{m-1}q_{m}\right|}=\frac{\hat{B}^{m-1}\cdot\left(m-1\right)!^{d_{b}}}{\left(m-1\right)!^{d}m!^{d}A^{2m-1}}\exp\left(o\left(m\right)\right)=\frac{1}{\left(m-1\right)!^{2d_{a}-d_{b}}}\cdot\left(\frac{\hat{B}}{A^{2}}\right)^{m}\exp\left(o\left(m\right)\right)
\]

Similarly to the previous case, given $\varepsilon>0$, and using
the fact that $2d_{a}-d_{b}\geq1$, for all $n$ large enough we obtain
\begin{align*}
\left|\frac{p_{n}}{q_{n}}-\alpha\right| & \leq\sum_{m=n+1}^{\infty}\frac{\prod_{k=1}^{m-1}\left|b\left(k\right)\right|}{\left|q_{m-1}q_{m}\right|}\leq\sum_{m=n+1}^{\infty}\frac{1}{\left(m-1\right)!^{2d_{a}-d_{b}}}\cdot\left(\frac{\hat{B}}{A^{2}}+\varepsilon\right)^{m-1}\\
 & =\frac{1}{n!^{2d_{a}-d_{b}}}\left(\frac{\hat{B}}{A^{2}}+\varepsilon\right)^{n}\sum_{m=0}^{\infty}\left(\frac{n!}{\left(n+m\right)!}\right)^{2d_{a}-d_{b}}\cdot\left(\frac{\hat{B}}{A^{2}}+\varepsilon\right)^{m}\\
 & \leq\frac{1}{n!^{2d_{a}-d_{b}}}\left(\frac{\hat{B}}{A^{2}}+\varepsilon\right)^{n}\left[\sum_{m=0}^{\infty}\left(\frac{1}{m!}\right)^{2d_{a}-d_{b}}\cdot\left(\frac{\hat{B}}{A^{2}}+\varepsilon\right)^{m}\right].
\end{align*}
The infinite sum in the last expression converges to some finite limit
$\tilde{C}$, so we conclude that 

\[
\ln\left|\frac{p_{n}}{q_{n}}-\alpha\right|\leq\left(d_{b}-2d_{a}\right)\ln\left(n!\right)+n\ln\left|\frac{\hat{B}}{A^{2}}+\varepsilon\right|+\ln\left|\tilde{C}\right|\sim\left(d_{b}-2d_{a}\right)n\cdot\ln\left|n\right|.
\]

\section{List of Proven Formulas} \label{appendix-Proven Formulas}
\renewcommand{\arraystretch}{1.5}
\setlength{\tabcolsep}{2pt}
\newcolumntype{T}{>{\small}c}
\begin{center}
\begin{longtable}{|T|T|T|T|T|T|}

        \caption{This is a table showing the automatically generated conjectured formulas that were analytically proven. For the method of proving, see appendix \ref{appendix-sub-formula verification}.
        Note that lines \textbf{1,4} and \textbf{7} are proven infinite families of formulas, generalized from the cases found in the dataset.}
        \label{tab:proven-formulas}\\

        \hline
        $A_n$&$B_n$ & $h_1(n)$&$h_2(n)$&$f(n)$ & Known limit of resulting infinite sum \\
        \hline
        
        $\omega+1$&$ n^2+\omega n$    & $-n$&$n+\omega$&$1$    & $ (\omega+1)/_2F_1(1,1,\omega+2,-1) $ \\
        \hline
        
        $ 5$&$ n^2+4n$    & $-n$&$n+4$&$1$    &$ - \frac{12}{ 131 + 192 \log{\left(2 \right)}}$ \\
        \hline
        
        $ 4$&$ n^2+3n$    & $-n$&$n+3$&$1$    &   $ \frac{3}{24 \log{\left(2 \right)} - 16} $   \\
        \hline
        
        $  \omega$&$ (\omega n+1)^2$    & $-\omega n-1$&$\omega n+1$&$1 $    & $(\omega+1)/_2F_1(1,\frac{\omega+1}{\omega},\frac{2\omega+1}{\omega},-1) - 1$ \\
        \hline
        
        $  2$&$ 4n^2+4n+1$    & $-2n-1$&$2n+1$&$1 $    & $ \frac{\pi}{4 - \pi} $   \\
        \hline
        
        $  1$&$ n^2+2n+1$    & $-n-1$&$n+1 $&$ 1$    &    $\frac{\log{\left(2 \right)}}{1 - \log{\left(2 \right)}} $  \\
        \hline
        
        $  \omega$&$  n^2+(\omega+1)n+\omega$    & $-n-1$&$ n+\omega$&$1 $    & $(\omega+1)/_2F_1(1,2,\omega+2,-1)-1$ \\
        \hline
        
        $  3$&$ n^2+4n+3$    & $-n-1$&$ n+3$&$1 $    &  $\frac{4}{34 - 48 \log{\left(2 \right)}} - 1$  \\
        \hline
        
        $  2$&$ n^2+3n+2$    & $-n-1$&$n+2 $&$1 $    &   $\frac{3}{9 - 12 \log{\left(2 \right)}} - 1$  \\
        \hline
        
        $  5$&$ n^2+2n$    & $n$&$n+2$&$n+\frac{3}{2}$    &  $  \frac{2}{17 - 24 \log(2)} $  \\
        \hline 
        
        $  5$&$ n^2+4n+3$    & $-n-1$&$n+3$&$n+\frac{5}{2}$    &  $\frac{3 (17 - 24 \log{\left(2 \right)})}{120 \log{\left(2 \right)} - 83} $    \\
        \hline
        
        $  4$&$ n^2+n$    & $-n$&$n+1$&$n+1$    &  $  \frac{1}{3 - 4 \log{\left(2 \right)}} $    \\
        \hline 
        
          $  4$&$ n^2+3n+2$    & $-n-1$&$n+2$&$n+2$    &  $ \frac{6 - 8 \log{\left(2 \right)}}{16 \log{\left(2 \right)} - 11} $    \\
        \hline
        
        $  4$&$ 4n^2+4n$    & $-2n$&$2n+2$&$1$    & $  \frac{2}{2 \log{\left(2 \right)} - 1} $\\   
       \hline
       
        $  3$&$ n^2$    & $-n$&$ n$&$ n+\frac{1}{2}$    &       $   \frac{1}{1 - \log{\left(2 \right)}} $  \\
        \hline
        
        $  3$&$n^2+2n+1 $    & $-n-1$&$ n+1$&$n+\frac{3}{2} $    &  $ \frac{1 - \log{\left(2 \right)}}{3\log{\left(2 \right)} - 2} $ \\
        \hline   

        $  3$&$ n^2+4n+4$    & $-n-2$&$-n+2 $&$n+\frac{5}{2} $    &    $ \frac{4 (3 \log{\left(2 \right)} - 2)}{ 7 - 10 \log{\left(2 \right)}}$  \\
        \hline
        $  1$&$ n^2+4n+4$    & $n+2$&$-n-2 $&$1 $    &   $\frac{2}{2\log{\left(2 \right)} - 1} -2$ \\
        \hline 
        
        $  4$&$ 4n^2-1$    & $-2n+1$&$2n+1$&$1 $    &$  \frac{\pi + 2}{\pi - 2} $   \\
        \hline
        
        $  2$&$ 4n^2-4n-1$    & $-2n+1$&$2n-1$&$1 $    & $  \frac{4}{\pi} + 1 $ \\
        \hline

        $  5$&$ 4n^2+2n-2$    & $-2n-1$&$2n+2$&$1$    & $ \frac{22 + 12\sqrt{2}}{7} $\\
        \hline
        
        $  5$&$ 4n^2+2n$    & $-2n-1$&$2n$&$n+\frac{3}{4} $    & $ 3 + 2\sqrt{2}$    \\
        \hline
        $  3$&$ 4n^2-2n$    & $-2n+1$&$2n$&$1 $    &  $  2 + \sqrt{2}$\\
        \hline

        $2n^2 + 2n +1$ & $-n^4$ & $ n^{2}$ & $n^{2}$ & $1$ & $\frac{1}{\zeta(2)}$ \\
        \hline
        
        $2n + 1$ & $n^4$ & $- n^{2}$ & $n^{2}$ & $1$ & $\frac{2}{\zeta(2)}$ \\
        \hline
        
        $n^2 + n + 1$ & $n^4 + n^2 + 1$ & $\phi ( n^{2} + n + 1)$ & $ (\phi - 1)(-n^{2} + n - 1)$ & $1$ & $\phi$ \\
        \hline

        $2n^2 + 2n + 2$ & $n^4 + n^2 + 1$ & $(\sqrt{2} + 1) ( n^{2} + n + 1)$ & $ (\sqrt{2} - 1)(-n^{2} + n - 1)$ & $1$ & $1 + \sqrt{2}$ \\
        \hline
        
        $2n^2 + 2n + 2$ & $2n^4 + 2n^2 + 2$ & $(\sqrt{3} + 1) ( n^{2} + n + 1)$ & $ (\sqrt{3} - 1)(-n^{2} + n - 1)$ & $1$ & $1 + \sqrt{3}$ \\
        \hline

        $-5$ & $n^2 + 2n$ & $-n$ & $-n - 2$ & $n + \frac{3}{2})$ &  $-\frac{5}{\frac{85}{2} - 60 \log(2)} $\\
        \hline
        
        $-5$ & $n^2 + 4n$ & $n$ & $-n - 4$ & $1$ & $ - \frac{5}{- \frac{655}{12} + 80 \log{\left(2 \right)}}$ \\
        \hline
        
        $-4$ & $n^2 + n$ & $n$ & $-n - 1$ & $n+1$ & $ - \frac{4}{12 - 16 \log{\left(2 \right)}} $ \\
        \hline
        
        $-4$ & $n^2 + 3n$ & $n$ & $-n - 3$ & $1$ & $ - \frac{4}{- \frac{64}{3} + 32 \log{\left(2 \right)}} $ \\
        \hline
        
        $-4$ & $n^2 + 3n + 2$ & $n + 1$ & $-n - 2$ & $n + 2$ & $ \frac{1}{2} - \frac{9}{2 \left(-99 + 144 \log{\left(2 \right)}\right)} $ \\
        \hline
        
        $-4$ & $n^2 + 5n + 4$ & $n + 1$ & $-n - 4$ & $1$ &  $ 1 - \frac{5}{\frac{335}{3} - 160 \log{\left(2 \right)}} $ \\
        \hline
        
        $-4$ & $4n^2 + 4n$ & $2n$ & $-2n - 2$ & $1$ & $ - \frac{4}{-2 + 4 \log{\left(2 \right)}} $ \\ 
       \hline
       
        $-3$ & $n^2$ & $n$ & $-n - 2$ & $1$ & $ - \frac{3}{3 - 3 \log{\left(2 \right)}} $ \\
        \hline
        
        $-3$ & $n^2 + 2n$ & $n$ & $-n - 2$ & $1$ & $ - \frac{3}{- \frac{15}{2} + 12 \log{\left(2 \right)}} $ \\
        \hline
        
        $-3$ & $n^2 + 2n + 1$ & $n + 1$ & $-n - 1$ & $n+\frac{3}{2}$ &  $ \frac{1}{3} - \frac{10}{3 \left(-20 + 30 \log{\left(2 \right)}\right)} $ \\
        \hline
        
        $-3$ & $n^2 + 4n + 3$ & $n+1$ & $-n - 3$ & $1$ &  $ 1 - \frac{4}{34 - 48 \log{\left(2 \right)}} $  \\
        \hline
        
        $-3$ & $n^2 + 4n + 4$ & $n+2$ & $n-2$ & $n+\frac{5}{2}$ & $ \frac{6}{5} - \frac{21}{5 \left(\frac{147}{2} - 105 \log{\left(2 \right)}\right)} $ \\
        \hline
        
        $-2$ & $n^2 + n$ & $n$ & $-n - 1$ & $1$ & $ - \frac{2}{-2 + 4 \log{\left(2 \right)}} $ \\
        \hline
        
        $-2$ & $n^2 + 3n + 2$ & $n + 1$ & $-n - 2$ & $1$ & $ 1 - \frac{3}{9 - 12 \log{\left(2 \right)}} $ \\
        \hline
        
        $-2$ & $4n^2$ & $2n$ & $-2n$ & $1$ & $ - \frac{1}{\log{\left(2 \right)}} $ \\
        \hline
        
        $-1$ & $n^2$ & $n$ & $-n$ & $1$ & $ - \frac{1}{\log{\left(2 \right)}} $ \\
        \hline
        
        $-1$ & $n^2 + 2n + 1$ & $n+1$ & $-n - 1$ & $1$ & $ 1 - \frac{2}{2 - 2 \log{\left(2 \right)}} $ \\
        \hline
        
        $-1$ & $n^2 + 4n + 4$ & $-n - 2$ & $n + 2$ & $1$ & $2 - \frac{3}{- \frac{3}{2} + 3 \log{\left(2 \right)}} $ \\
        \hline
        
        $-4$ & $4n^2 - 1$ & $2n - 1$ & $-2n - 1$ & $1$ & $ - \frac{3}{- \frac{3}{2} + \frac{3 \pi}{4}} - 1 $ \\
        \hline
        
        $-2$ & $4n^2 - 4n + 1$ & $2n - 1$ & $-2n + 1$ & $1$ & $ - \frac{4}{\pi} - 1 $ \\
        \hline
        
        $-2$ & $4n^2 + 4n + 1$ & $2n + 1$ & $-2n - 1$ & $1$ & $ 1 - \frac{3}{3 - \frac{3 \pi}{4}} $ \\
        \hline
        
        $-5$ & $4n^2 + 2n - 2$ & $2n + 1$ & $-2n - 2$ & $1$ & $ - \frac{4}{- \frac{20}{3} + \frac{16 \sqrt{2}}{3}} - 1 $ \\
        \hline
        
        $-5$ & $4n^2 + 2n$ & $2n + 1$ & $-2n$ & $n+\frac{3}{4}$ & $ - \frac{32 \Gamma\left(\frac{11}{4}\right)}{9 \left(- \frac{-56 \Gamma\left(\frac{5}{4}\right) \Gamma\left(\frac{7}{4}\right)}{\pi}  -14 \right) \Gamma\left(\frac{3}{4}\right)} - \frac{1}{3} $ \\
        \hline
        
        $-3$ & $4n^2 - 2n$ & $2n - 1$ & $-2n$ & $1$ &  $ - \frac{2}{-2 + 2 \sqrt{2}} - 1 $ \\
        \hline
\end{longtable}
\end{center}

\end{document}